\documentclass[10pt,twocolumn,letterpaper]{article}

\usepackage{cvpr}
\usepackage{times}
\usepackage{epsfig}
\usepackage{graphicx}
\usepackage{amsmath}
\usepackage{amssymb}


\usepackage[pagebackref=true,breaklinks=true,letterpaper=true,colorlinks,bookmarks=false]{hyperref}

\usepackage{caption}
\usepackage{subcaption}
\usepackage{cleveref}
\usepackage{amsthm}

\newcommand\z{\mathbf{z}}
\newcommand\m{\mathbf{m}}
\renewcommand\v{\mathbf{v}}
\newcommand\x{\mathbf{x}}
\newcommand\rv{\mathbf{r}}
\newcommand\n{\mathbf{n}}

\newcommand\s{\mathbf{s}}
\newcommand\te{\mathbf{t}}
\newcommand\ba{\mathbf{b}}

\newcommand\R{\text{R}}
\newcommand\M{\text{M}}
\newcommand\V{\text{V}}
\newcommand\G{\text{G}}
\newcommand\D{\text{D}}
\newcommand\E{\text{E}}
\newcommand\dom{\mathcal{D}}
\newcommand\lgan{{\cal L}_{GAN}}
\newcommand\lae{{\cal L}_{AE}}

\newcommand\lsmooth{{\cal L}_{S}}

\newcommand\mofaspace{0.07}
\newcommand\mofaspaceour{0.14}

\newtheorem{theorem}{Theorem}

\cvprfinalcopy 


\ifcvprfinal\pagestyle{empty}\fi
\begin{document}

\title{Unsupervised 3D shape learning from 2D image collections}
\title{Learning 3D Shapes from Images neither with Templates nor with Annotation}
\title{Unsupervised Learning of 3D Shapes from Image Collections without Templates}
\title{Unsupervised Learning of 3D Shapes from Image Collections}
\title{Unsupervised 3D Shape Learning from 2D Images in the Wild}
\title{Unsupervised 3D Shape Learning from Image Collections in the Wild}

\author{Attila Szab\'{o}\\
University of Bern\\
Switzerland\\
{\tt\small szabo@inf.unibe.ch}
\and
Paolo Favaro\\
University of Bern\\
Switzerland\\
{\tt\small paolo.favaro@inf.unibe.ch}
}

\maketitle

\begin{abstract}
We present a method to learn the 3D surface of objects directly from a collection of images. Previous work achieved this capability by exploiting additional manual annotation, such as object pose, 3D surface templates, temporal continuity of videos, manually selected landmarks, and foreground/background masks. In contrast, our method does not make use of any such annotation. Rather, it builds a generative model, a convolutional neural network, which, given a noise vector sample, outputs the 3D surface and texture of an object and a background image. These 3 components combined with an additional random viewpoint vector are then fed to a differential renderer to produce a view of the sampled object and background. Our general principle is that if the output of the renderer, the generated image, is realistic, then its input, the generated 3D and texture, should also be realistic. To achieve realism, the generative model is trained adversarially against a discriminator that tries to distinguish between the output of the renderer and real images from the given data set. Moreover, our generative model can be paired with an encoder and trained as an autoencoder, to automatically extract the 3D shape, texture and pose of the object in an image. Our trained generative model and encoder show promising results both on real and synthetic data, which demonstrate for the first time that fully unsupervised 3D learning from image collections is possible.
\end{abstract}

\section{Introduction}

%
%
%
%
%
%

In computer vision a fundamental task is to interpret content in images as part of a 3D world. However, this task proves to be extremely challenging, because images are only 2D projections of the 3D world, and thus incomplete observations. 
To compensate for this deficiency, one can use multiple views of the same scene gathered simultaneously, as in multiview stereo \cite{multiview}, or over time, as in structure from motion \cite{sfm}. These approaches are able to combine the information in multiple images through 2D landmark correspondences, because they share the same instance of the world. It is interesting to notice, that no additional information other than the images themselves and a model of image formation are needed to extract 3D information and the pose of the camera. 

In this paper we explore a direct extension of this method to the case where we are given images that never depict the same object twice and aim to reconstruct the unknown 3D surface of the objects as well as their texture. We consider the case where we are made available only a collection of images and no additional annotation or prior knowledge, for example, in the form of a 3D template. Unlike multiview stereo and structure from motion, in this case pixel-based correspondence between different images cannot be established directly.

At the core of our method is the design of a generative model that learns to map images to a 3D surface, a texture and a background image. Then, we use a renderer to generate views given the 3D surface, texture, background image and a randomly sampled viewpoint. We postulate that if the three components are realistic, then also the rendered images from arbitrary (within a suitable distribution) viewpoints will be realistic. To assess realism, we use a discriminator trained in an adversarial fashion \cite{ganGoodfellow}. 
Therefore, by assuming that the samples cover a sufficient distribution of viewpoints, the generative model should automatically learn 3D models from the data.

Finally, we also train an encoder, by pairing it with the generative model and the renderer as an autoencoder. In this training the generative model learns to estimate the 3D surface, the texture, and the viewpoint of an object given an input image. We also formally prove that under suitable assumptions the proposed framework can successfully build such a generative model just from images without additional manual annotation. Finally, we also demonstrate our fully unsupervised method on both real and synthetic data.

\section{Related work}

Traditional 3D reconstruction techniques require multiple views of the objects \cite{sfm, multiview}. They use hand-crafted features \cite{lowe2004distinctive} to match key-points, and they exploit the 3D geometry to estimate their locations. In contrast, 3D reconstruction from a single image is a much more ambiguous problem. Recent methods address this problem by learning the underlying 3D from video sequences \cite{zhou2017kitti}, or by using additional image formation assumptions, such as shape from shading models \cite{horn1989obtaining}. The direct mapping from an image to a depth map can be learned from real data or also from synthetic images as by Sela \etal \cite{sela}. As in structure from motion, Vedaldi \etal \cite{vedaldi2017look} and Zhou \etal \cite{zhou2017kitti} show that it is not necessary to use human annotations to learn the 3D reconstruction of a scene as long as we are given video sequences of the same scene.

In this work we focus on learning a generative model. 3D morphable models (3DMM) \cite{blanz1999, basel2017} are trained with high quality face scans and provide a high quality template for face reconstruction and recognition. Tran \etal \cite{tran2017} and Genova \etal \cite{genova2018} train neural networks for regressing the parameters of 3DMMs. Model-based Face Autoencoders (MoFA) and Genova \etal \cite{mofa2017, genova2018} only use unlabelled training data, but they rely on existing models that used supervision. Therefore, with different object categories, these methods require a new pre-training of the 3DMM and knowledge of what 3D objects they need to reconstruct, while our method applies directly to any category without prior knowledge on what 3D shape the objects have.

Unlike 3DMMs, Generative Adversarial Nets (GAN) \cite{ganGoodfellow} and Variational Autoecoders (VAE) \cite{kingma2014vae} do not provide interpretable parameters, but they are very powerful and can be trained in a fully unsupervised manner. In recent years they improved significantly \cite{arjovsky2017wasserstein, gulrajani2017wgangp, pgan2018}. 3DGANs \cite{wu2016_3dgan} are used to generate 3D objects with 3D supervision. It is possible however to use GANs to train 3D generators by only using 2D images and differentiable renderers similar to the Neural Mesh Renderer \cite{neuralmesh2018} or OpenDR \cite{loper2014opendr}. PrGAN \cite{prgan2017} learns a voxel based representation with GAN, and Henderson \etal \cite{henderson2018} train surface meshes using VAE. They are both limited to synthetic data as they do not model the background. This can be interpreted as using the silhouettes as supervision signal. In contrast we only use 2D image collections and learn a 3D mesh with texture, and model the background as well. PrGAN \cite{prgan2017} as well as our method is a special case of AmbientGAN \cite{bora2018ambientgan}. We extend their theory to the case of 3D reconstruction and describe failure modes, including the Hollow-mask illusion \cite{gregory1970} and the reference ambiguity \cite{szabo2018understanding}.


Our approach can also be interpreted as disentangling the 3D and the viewpoint factors. Reed \etal \cite{reed2015deep} solved that task with full supervision using image triplets. They utilised an autoencoder to reconstruct an image from the mixed latent encodings of other two images. Mathieu \etal \cite{mathieu2016disentangling} and Szab\'{o} \etal \cite{szabo2018understanding} only use image pairs that share an attribute, thus reducing the supervision with the help of GANs. By using only a standard image formation model (projective geometry), by setting a prior on the viewpoint distribution in the dataset, we demonstrate the disentangling of the 3D from the viewpoint and the background for the case of independently sampled input images.


\section{Unsupervised Learning of 3D Shapes}

%
%
%
%
\begin{figure}[t]
\begin{center}
	\begin{subfigure}[b]{0.8\linewidth}
		\includegraphics[width=1\linewidth]{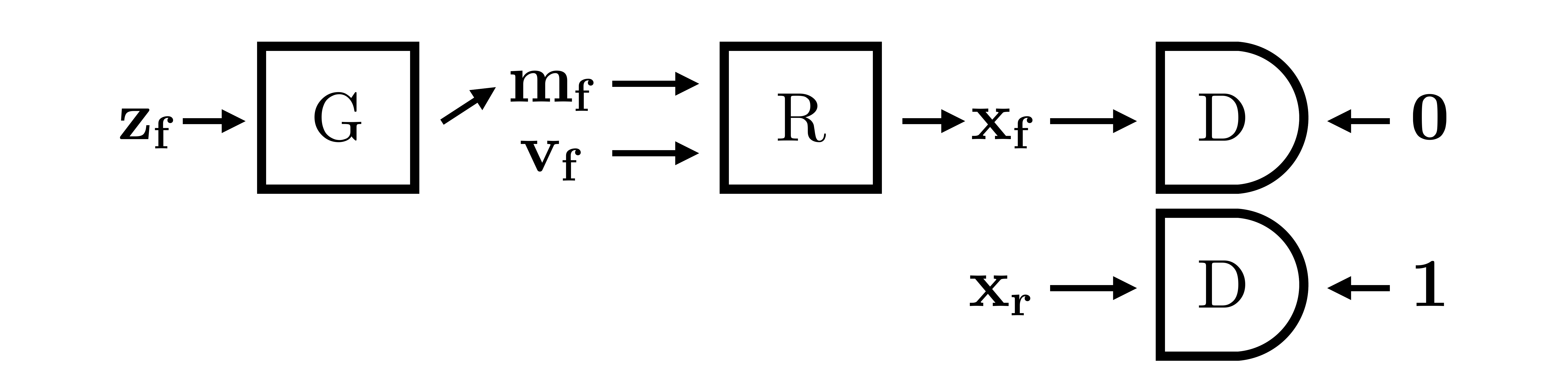}
		\caption{GAN}
	\label{fig:schemaGAN}
	\end{subfigure}
	\begin{subfigure}[b]{0.8\linewidth}
		\includegraphics[width=1\linewidth]{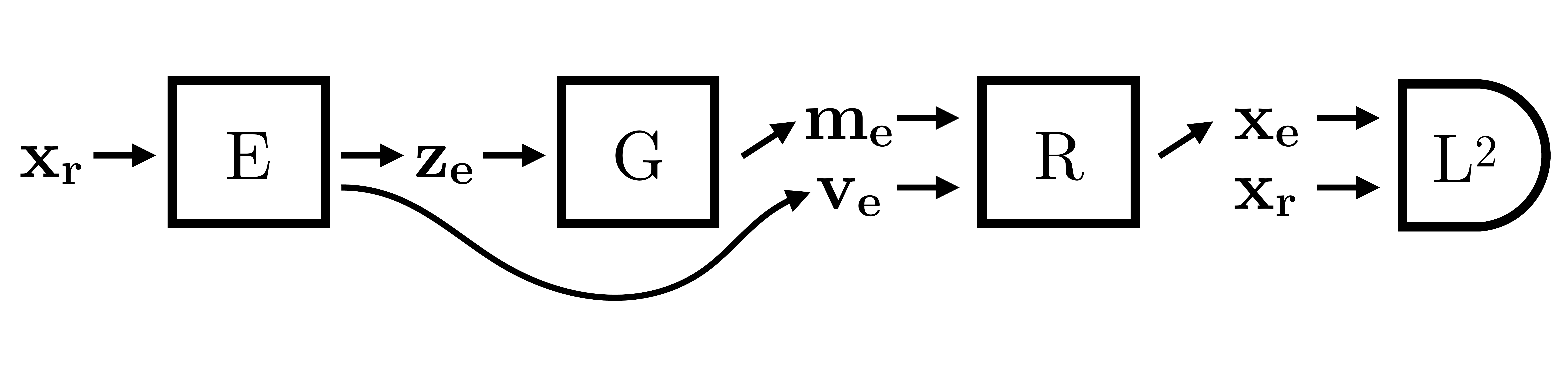}
		\caption{Autoencoder}
			\label{fig:schemaAE}
	\end{subfigure}
	\caption{The GAN and the autoencoder training schemes. First the generator $\G$ and the discriminator $\D$ are trained. Then, the encoder $\E$ is trained with a fixed generator $\G$. The renderer $\R$ has no trainable parameters.}
\end{center}
\end{figure}

We are interested in building a mapping from an image to its 3D model, texture, background image, and  viewpoint. We call all these outputs the \emph{representation}, and, when needed, we distinguish the first three outputs, the \emph{scene representation}, from the viewpoint, the \emph{camera representation}. These outputs are then used by a differential renderer $\R$ to reconstruct an image. Thus, the mapping of images to the representation, followed by the renderer can be seen as an autoencoder. We break the mapping from images to the representation into two steps: First, we encode images to a vector $\mathbf{z_e}$ (with Gaussian distribution) and the camera representation $\mathbf{v_e}$, and second, we decode the vector to the scene representation $\mathbf{m_e}$ (see Figure.~\ref{fig:schemaAE}). The first part is implemented by an encoder $\E$, while the second is implemented via a generator $\G$. In our approach we first train the generator in an adversarial fashion against a discriminator $\D$ by feeding Gaussian samples $\mathbf{z_f}$ as input (see Figure.~\ref{fig:schemaGAN}). The objective of the generator is to learn to map Gaussian samples to scene representations $\mathbf{m_f}$ that result in realistic renderings $\mathbf{x_f}$ for random viewpoints $\mathbf{v_f}$. Therefore, during training we also use random samples for the viewpoints fed to the renderer.
In a second step we then freeze the generator and train the encoder to map images $\mathbf{x_r}$ to vectors $\mathbf{z_e}$ and viewpoints $\mathbf{v_e}$ that allow the generator and the renderer to reconstruct the input $\mathbf{x_r}$.
The encoder can be seen as the general inverse mapping of the generator. However, it is also possible to construct a per-sample inverse of the generator-renderer pair by solving a direct optimization problem.
We implement all the mappings $\E$, $\G$, and $\D$ with neural networks, and we assume they are continuous throughout the paper.
We optimize the GAN objective using WGAN with gradient penalty \cite{gulrajani2017wgangp}.
The chosen representation and all these steps will be described in detail in the next sections.

At the core of our method is the principle that the learned representations will be correct if the rendered images are realistic. This constraint is captured by the adversarial training. We also show analysis that, under suitable assumptions, this constraint is sufficient to recover the correct representation.

\subsection{Enforcing Image Realism through GAN}

In our approach, we train a generator $\G$ to map Gaussian samples $\z_f \sim \mathcal{N}(0,I)$ to 3D models, where $I$ is the identity matrix (and assume that the dimensionality of the samples is sufficient to represent the complexity of the given dataset). We also assume that the viewpoints $\v_f$ are sampled according to a known viewpoint distribution $p_\v$. Then, the GAN loss is
\begin{align}
	\lgan(\G,\D) = E_{\x_r}[ \D(\x_r) ] - E_{\z_f,\v_f}[ \D(\x_f) ],
\end{align}
where $\x_f = \R(\G(\z_f), \v_f )$ are the generated fake images and $\x_r$ are the real data samples. Following Arjovsky \etal \cite{arjovsky2017wasserstein}, we optimize the objective above subject to additional constraints
\begin{align}
	\min_{\G} &\max_{ | \D |_L \le 1} \lgan(\G,\D)\\
	&\text{subject to } |\G(\z_f)|_{3D}\le 1,\quad \z_f \sim \mathcal{N}(0,I),
	\label{eq:gan}
\end{align}
where $ | \cdot |_L $ is the Lipschitz-norm and $|\cdot|_{3D}$ denotes the scale of the 3D model (returned by $\G(\z_f)$). The scale constraint stabilizes the training of the generator especially at the beginning of the training. 


\subsection{Inverting the Generator-Renderer Pair}

In this section we describe a per-sample inversion of the generator and renderer. That is, given a data sample $\x_s$ we would like to find the input vector $\z$ and viewpoint $\v$ that once fed to the trained generator $\G$ and the renderer $\R$ return the image $\x_s$. For simplicity, we restrict our search of the viewpoint to the support of the viewpoint distribution. We denote the support of the distribution $p_\rv$ of a random variable $\rv$ as $\dom_\rv = \overline{ \{ \rv : p_\rv(\rv) > 0\} }$, where $\overline{\{\cdot\}}$ is the closure of a set. We formulate this inverse mapping for a particular data sample $\x_s$ by minimizing the L2 norm
\begin{align}
\begin{split}
	\z^*, \v^* = \arg\min_{\z,\v}  | \x_s - \R(\G(\z), \v) |^2 \\
	\text{subject to} \quad \z \in \dom_\z, \v \in \dom_\v.
\end{split}
\label{eq:invgan}
\end{align}
Finally, the reconstructed 3D model of $\x_s$ is $\m^* = \G(\z^*)$.

\subsection{Extracting Representations from Images}

A more general and efficient way to extract the representation from an image is to train an encoder together with the generator-renderer pair, where the generator has been pre-trained and is fixed. The training enforces an autoencoding constraint so that the input image is mapped to a representation and then through the renderer back to the same image.
In this step, we train only an encoder $\E$ to learn the mapping from an image $\x_r$ to two outputs: $\E_\z(\x_r)$ and $\E_\v(\x_r)$. The first is a vector $\z_e$ and the second a viewpoint $\v_e$ that the generator and the renderer can map back to $\x_r$. The autoencoder loss is therefore
\begin{align}
	\lae(\E) = E_{\x_r}[ | \x_r - \R(\G(\z_e), \v_e) |^2 ] 
\end{align}
where the estimated latent vectors and viewpoints are $\z_e = \E_\z(\x_r)$ and $\v_e = \E_\v(\x_r)$. We denote the estimated 3D model and image as $\m_e = \G(\z_e)$ and $\x_e = \R(\m_e, \v_e)$. Finally, to train the encoder $\E$ we minimize the objective
\begin{align}
\begin{split}
	\min_{\E} & ~~ \lae(\E) \\
	\text{subject to} & ~~ \z_e \in \dom_\z, \v_e \in \dom_\v.
\end{split}
\end{align}

\subsection{3D Representation and Camera Models}

In this section we describe the chosen representation and image formation model. This representation affects the output format $\m$ of the generator $\G$ and also the design of the differential renderer $\R$. We now detail the contents of $\m$ into the object surface $\s$, the object texture $\te$ and the background image $\ba$.
 
\noindent\textbf{3D Surface and Texture. }
Currently, our representation considers a single 3D object and its texture. The object surface $\s$ consists of a mesh of triangles (a fixed number). The mesh is given as a list of vertices and the list of triangles, which is fixed and consists of triplets of vertex indices. The vertices have associated RGB colors and 3D coordinates indicating its position in a global reference frame. Therefore, $\s$ is a vector of 3D coordinates and $\te$ is a vector (of the same size) of RGB values. 
We illustrate graphically this representation in Figure.~\ref{fig:representation3D}. 
\begin{figure}[t]
\centering
		\includegraphics[width=.8\linewidth,trim={2cm 10cm 24cm 6.7cm},clip]{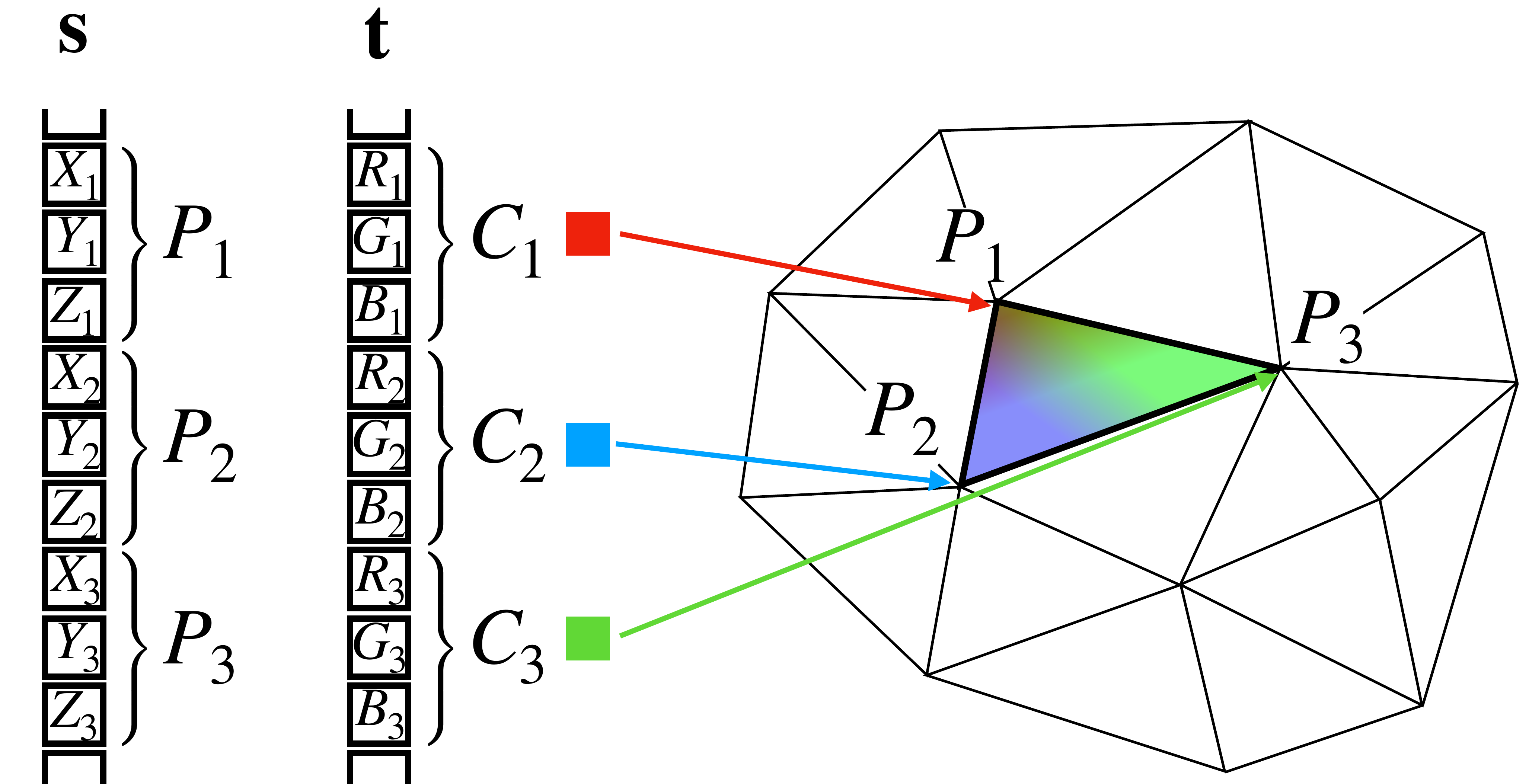}
	\caption{Illustration of the 3D and texture representation.}
	\label{fig:representation3D}
\end{figure}

\noindent\textbf{Background. }
We explicitly model the background $\ba$ behind the object to avoid the need for supervision with silhouettes/masks. Also, we move the background when we change the viewpoint. This is used to make sure that the generator avoids learning trivial representations. For example, if we used a static background, the generator $\G$ could learn to place the object outside the field of view of the camera and to simply map the whole input image to the background texture. 
In our representation we fix the 3D coordinates of the background on a sphere centered around the origin and only learn its texture. To image the background we also approximate the image projection on the plane with a projection on a spherical image plane (a small solid angle), so that the resolution of the background would not change across the image. With this model, viewpoint changes, \ie, rotations about the origin (we only consider 2 angles), correspond to 2D shifts of the background texture $\ba$. Also, to avoid issues with the resolution of the background, we simulate a different scaling of the background rotation compared to the object rotation. This results in a much larger field of view for the background compared to the field of view used for the object.
Finally, by using the assumption that the objects are around the center of the image, it would not be possible for the generator to learn to place the objects on the background, because a change in the viewpoint will move the background and with it the object (thus away from the center of the image). 

\noindent\textbf{Camera Model and Viewpoint. }
Our camera model uses perspective projection. However, to minimize the distortion we use a large focal length (\ie, the distance between the image plane and the camera center). The image plane is also placed at the origin. When we change the viewpoint, we rotate the camera around the origin and thus around its image plane. In this way the image resolution and distortions are quite stable for a wide range of viewpoints. We only consider 2 rotation angles in Euler coordinates along the $x$ and $y$ axis. 


\subsection{Network Architectures}

\noindent\textbf{Generator.}
Based on the chosen representation, our generator consist of three sub-generators: one that learns the background texture $\ba$, a second that learns the object texture $\te$ and a third that learns the object geometry $\s$. 
The latent input vector $\z$ is split into two parts: $\z_b$ for the background image and $\z_o$ for the object, since we generate the background and the object independently, but then expect correlation between the object texture and 3D. The sub-generators for the object texture and the object surface receive as input $\z_o$, while the sub-generator for the background image receives as input $\z_b$. The outputs of all the sub-generators have the same output size since all have 3 channels and the same resolution.
For the object and background texture sub-generators we use two separate models based on the architecture of Karras \etal \cite{pgan2018}.

The object surface is initially represented in spherical coordinates, where the azimuth and polar angles are defined on a uniform tessellation and we only recover the radial distance.
Learning spherical coordinates helps to keep the resolution of the mesh consistent during training, in contrast to learning the 3D coordinates directly.
The radial distance is then obtained as a linear combination of $32$ basis radial distances and an average radial distance. The object surface sub-generator recovers both the coefficients of the linear combination and the $32+1$ basis radial distances. While the basis is represented directly as network parameters, the coefficients are obtained through the application of a fully connected layer to the input $\z_o$.
To ensure a smooth convergence, we use a redundant coarse-to-fine basis.
The radial distance $\rho$ produced by the object surface sub-generator can be written as
\begin{align}
\rho = \sum_{i=0}^{32} \alpha_i \sum_{n=2}^6 \text{upsample}_n(\rho_{i,n})
\end{align}
where $\alpha_i$ is the $i$-th output of the fully connected layer of the sub-generator, $n$ is the level of coarseness of $\rho_{i,n}$, which corresponds to the resolutions $(2^n+1)\times (2^n+1)$, and $\text{upsample}_n$ is a bilinear interpolation function that maps the $n$-th scale to the final resolution of $129\times 129$ elements.
Although the representation of the radial distance $\rho$ is redundant (the resolution $\rho_{i,m}$ can also fully describe $\rho_{i,n}$ with $n<m$), we find that this hierarchical parametrization helps to stabilize the training. 
Finally, the estimated radial distance $\rho$, together with the azimuthal angle $\phi$ and polar angle $\theta$ are mapped to 3D coordinates $\s$ via the spherical coordinates transform and then given as input to the renderer.

\noindent\textbf{Discriminator and Encoder.}
The discriminator architecture is the same as in \cite{pgan2018}. We always rendered the images at full resolution (at $128 \times 128$ pixels), and then downsampled it to match the expected input size of the discriminator during the training with growing resolutions. The encoder architecture is also the same as the discriminators, with small modifications. The output vector size is increased from $1$ to $512$, and two small neural networks were attached to produce the two encoder outputs. The latent vectors $\z$ were estimated with a fully connected layer. The $3$ Euler angles were discretized and their probabilities were estimated by a fully connected layer and a softmax. Then a continuous estimate was computed by taking the expectation.

\begin{figure}[t]
\begin{center}
	\begin{subfigure}[b]{0.22\linewidth}
		\includegraphics[width=1\linewidth]{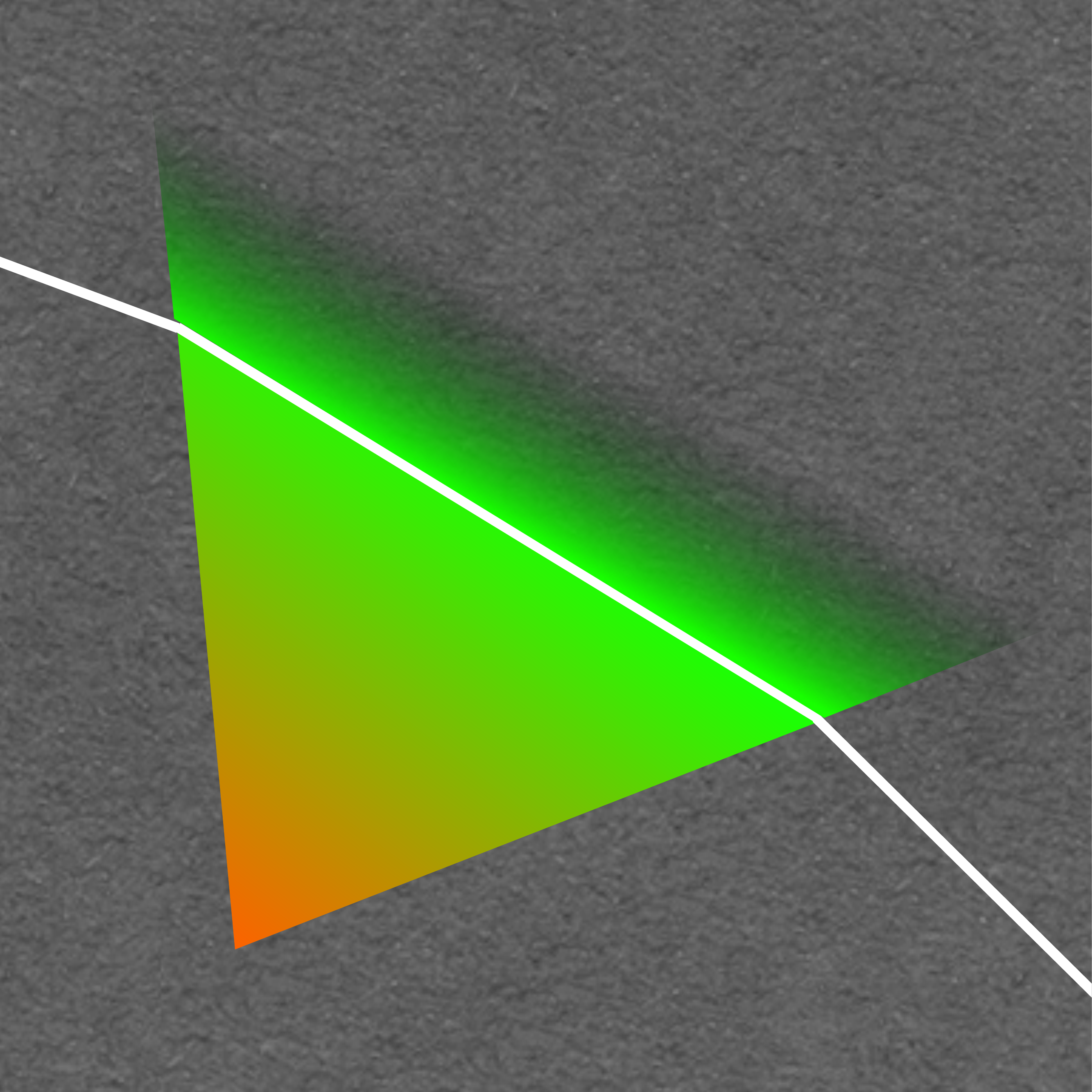}
	\end{subfigure}
	\begin{subfigure}[b]{0.22\linewidth}
		\includegraphics[width=1\linewidth]{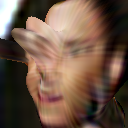}
	\end{subfigure}
	\begin{subfigure}[b]{0.22\linewidth}
		\includegraphics[width=1\linewidth]{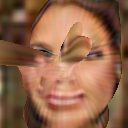}
	\end{subfigure}
	\begin{subfigure}[b]{0.22\linewidth}
		\includegraphics[width=1\linewidth]{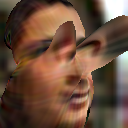}
	\end{subfigure}
	
	\caption{Blurred renderer. The leftmost image illustrates how the triangles are blurred at the boundaries. The other three images show failure cases of the non-blurred renderer. In this case, spikes and sails tend to appear on the 3D shape.}
	\vspace{-5 mm}
	\label{fig:ablation}
\end{center}
\end{figure}

\subsection{Differential Renderer}

The renderer takes as input the representation $\m$ from $\G$, \ie, the surface $\s$, the RGB colors $\te$, and the background image $\ba$, and the viewpoint $\v$ of the camera. For simplicity, we do not model surface and light interaction with shading, reflection, shadows or interreflection models. We simply use a Lambertian model. Since each vertex in $\s$ is associated to a color in $\te$, we interpolate colors inside the triangles using barycentric coordinates. This color model allows the back-propagation of gradients from the renderer output to the vertex coordinates in $\s$ and colors in $\te$. While the differentiation works for pixels inside triangles, at object boundaries and self-occlusions the gradients can not be computed. While others approximate gradients at boundaries \cite{neuralmesh2018, loper2014opendr}, we instead modify our rendering engine to draw blurred triangles, so the gradients can be computed exactly. The blurring process is illustrated in Figure.~\ref{fig:ablation}. At the boundaries and self-occlusions the triangles are extended and matted linearly against the background or the occluded part of the object. The effect of this blurred rendering is visually negligible, and it only effects a few pixels at the boundaries. However, we found that it contributes substantially to the stability of the training. In contrast, the non-blurry renderer often generates spikes and sails, which made the training unstable. Examples of this behavior are shown in Figure.~\ref{fig:ablation}.

\section{A Theory of 3D Generative Learning} \label{theory}

In this section we give a theoretical analysis of our methods. We prove that under reasonable conditions the generator $\G$ can output realistic 3D models. In \Cref{gantheorem} we adapted the theory in AmbientGAN \cite{bora2018ambientgan} to our approach, as AmbientGAN uses vanilla \cite{ganGoodfellow} and we use Wasserstein GAN \cite{arjovsky2017wasserstein, gulrajani2017wgangp}. We can provably obtain the 3D model of the object by inverting the generator $\G$ or by estimating it via the autoencoder. Our theory requires five main assumptions:
\begin{enumerate}
	\item The real images in the dataset are formed by the differentiable rendering engine $\R$ given the independent factors $\m_r \sim p_\m$ and $\v_r \sim p_\v$. The images are formed deterministically by $\x_r = \R(\m_r, \v_r)$. This assumption is needed to guarantee that the generator with the renderer can perfectly model the data. Note that $p_\m$ is assumed unknown; 
	\item We know the viewpoint distribution $p_\v$, so we can sample from it, but we do not know the viewpoint $\v_r$ for any particular data sample $\x_r$;
	\item The rendering engine R is bijective in the restricted domain $\dom_\m \times \dom_\v$, and we denote the inverse with $\R^{-1} = [ \M, \V]$, where $\M(\x_r) = \m_r$ and $\V(\x_r) = \v_r$. This property has to be true for any deterministic viewpoint or 3D estimator, otherwise the (single image) 3D reconstruction task can not be solved. Note that this is assumption is also needed in fully supervised methods;
	\item There is a unique probability distribution $p_\m$ that induces the distribution $p_\x$, when $\v \sim p_\v$. This assumption is not true for 3D data in general, as the 3D objects can have many symmetries. We will discuss and show ambiguities in the experimental section;
	\item Finally, we assume that the encoder, generator and discriminator have infinite capacity, and the training reaches the global optimum $\lgan = 0$ and $\lae=0$. Note that our first and second assumptions are a necessary condition for perfect training.
\end{enumerate}
Now we show that the generator learns the 3D geometry of the scene faitfully.
\begin{theorem} When the generator adversarial training is perfect, \ie, we achieve $\lgan = 0$, the generated scene representation distribution is identical to the real one, thus $G(\z_f) \sim p_\m$, with $\z_f\sim {\cal N}(0,I)$.
\label{gantheorem}
\end{theorem}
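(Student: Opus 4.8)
The plan is to turn the vanishing Wasserstein objective into an equality of the laws of the \emph{rendered} images, and then to invert the rendering step — using Assumptions 1--4 — to recover the desired equality of the laws of the scene representations $\m$. \textbf{Step 1 (from $\lgan=0$ to $p_{\x_f}=p_{\x_r}$).} Fix any generator $\G$ and let $p_{\x_f}$ be the law of $\x_f=\R(\G(\z_f),\v_f)$ and $p_{\x_r}$ the law of the data. Under the infinite-capacity hypothesis (Assumption 5) and the $1$-Lipschitz constraint, the inner maximization over $\D$ is exactly the Kantorovich--Rubinstein dual of the Wasserstein-1 distance, so $\max_{|\D|_L\le 1}\lgan(\G,\D)=W_1(p_{\x_r},p_{\x_f})$. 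Taking $\D\equiv 0$ shows this quantity is always nonnegative, hence the optimal value over $\G$ is $\ge 0$; the hypothesis $\lgan=0$ therefore says the optimal generator attains $W_1(p_{\x_r},p_{\x_f})=0$. Since $W_1$ is a genuine metric on probability measures supported in the (bounded) image domain, this forces $p_{\x_f}=p_{\x_r}$.

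\textbf{Step 2 (both image laws are pushforwards of a product through $\R$).} Write $p_{\m_f}$ for the law of $\G(\z_f)$ with $\z_f\sim\mathcal N(0,I)$; this is well defined since $\G$ is continuous. Because $\v_f\sim p_\v$ is sampled independently of $\z_f$, the pair $(\G(\z_f),\v_f)$ has product law $p_{\m_f}\otimes p_\v$, hence $p_{\x_f}=\R_\#\!\big(p_{\m_f}\otimes p_\v\big)$. By Assumption 1 the real images are produced in precisely the same manner from the true prior, so $p_{\x_r}=\R_\#\!\big(p_\m\otimes p_\v\big)$. Combining with Step 1, both $p_{\m_f}$ and $p_\m$ are distributions on the representation space that induce $p_\x$ through $\R$ when $\v\sim p_\v$.

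\textbf{Step 3 (uniqueness).} Assumption 4 asserts that such an inducing distribution is unique, so $p_{\m_f}=p_\m$, i.e. $\G(\z_f)\sim p_\m$ — the claim. Assumptions 2 and 3 (knowing $p_\v$, and bijectivity of $\R$ on $\dom_\m\times\dom_\v$) are what make Assumption 4 credible and also let one verify a posteriori that $p_{\m_f}$ is supported on $\dom_\m$: the common image law is supported on $\dom_\x=\R(\dom_\m\times\dom_\v)$, and injectivity of $\R$ on the restricted domain identifies almost every generated pair $(\G(\z_f),\v_f)$ with the true $(\m_r,\v_r)$.

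I expect the delicate point to be Step 3. One has to read Assumption 4 as uniqueness among \emph{all} candidate distributions (not only those already known to be supported on $\dom_\m$), and one has to rule out that the generator escapes to pathological representations outside $\dom_\m$ that nonetheless render into $\dom_\x$; this is exactly where the global injectivity of $\R$ on $\dom_\m\times\dom_\v$ is needed. A secondary caveat is that $\lgan=0$ can only be attained if the true prior $p_\m$ is consistent with the normalization $|\cdot|_{3D}\le1$ imposed in \eqref{eq:gan}, so I would state this consistency explicitly. Finally, Step 1's appeal to Kantorovich--Rubinstein is classical but leans on the idealized discriminator capacity, so the conclusion is, strictly, a statement about the idealized optimum rather than any finitely-trained model.
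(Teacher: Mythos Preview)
Your proof is correct and follows essentially the same route as the paper's: use the WGAN duality to conclude that $\lgan=0$ forces $p_{\x_f}=p_{\x_r}$, then invoke Assumption~4 (uniqueness of the inducing scene-representation law) to get $p_{\m_f}=p_\m$. The paper's own argument is a three-sentence sketch of exactly these two moves; your Step~2 pushforward formalism and the caveats about support on $\dom_\m$ and the $|\cdot|_{3D}\le 1$ constraint are additional care that the paper omits.
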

\begin{proof}
Since the training is perfect, the discriminator is also perfect. Then, the GAN loss is equal to the Wasserstein distance between $p_\x$ and $q_\x$, where $q_\x$ is the distribution of the generated fake data $\x_f \sim q_\x$. As the distance is zero, $p_\x$ and $q_\x$ are identical. This implies that $\m_f \sim p_\m$, with $\m_f = G(\z_f)$ and $\z_f\sim {\cal N}(0,I)$, as only $p_\m$ can induce the real distribution $p_\x$
\end{proof}
Next, we show that the ideal generator $\G$ obtained in Theorem~\ref{gantheorem} above can be inverted for a particular data sample $\x_s$ by solving~\eqref{eq:invgan}.
\begin{theorem} Suppose that the ideal generator $\G$ in Theorem~\ref{gantheorem} is given. Then, when the objective in Problem~\eqref{eq:invgan} achieves the global minimum, \ie, $\R(\G(\z^\ast), \v^\ast)=\x_s$,
the estimated scene representation $\G(\z^\ast)$ and the viewpoint $\v^\ast$ are correct, \ie, $\G(\z^*) = \M(\x_s)$ and $\v^* = \V(\x_s)$.
\label{invgantheorem}
\end{theorem}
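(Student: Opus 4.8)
The plan is to exploit the bijectivity of the renderer $\R$ on $\dom_\m\times\dom_\v$ (Assumption 3) together with the fact established in Theorem~\ref{gantheorem} that $\G$ perfectly matches the true scene distribution $p_\m$. First I would note that by Assumption 1, the data sample $\x_s$ is itself of the form $\x_s=\R(\m_r,\v_r)$ for some $\m_r=\M(\x_s)\in\dom_\m$ and $\v_r=\V(\x_s)\in\dom_\v$; this guarantees the feasible set of Problem~\eqref{eq:invgan} is nonempty and that the global minimum value zero is actually attained, provided $\m_r$ lies in the image of $\G$ restricted to $\dom_\z$. That last point is exactly what Theorem~\ref{gantheorem} buys us: since $\G(\z_f)\sim p_\m$ with $\z_f\sim\mathcal N(0,I)$, the pushforward of $\dom_\z$ under $\G$ has full support on $\dom_\m$, so (using continuity of $\G$ and closedness of $\dom_\z$) every $\m_r\in\dom_\m$ is of the form $\G(\z)$ for some $\z\in\dom_\z$. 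Hence there exists a feasible $(\z,\v)$ with $\R(\G(\z),\v)=\x_s$, and the optimizer $(\z^\ast,\v^\ast)$ achieves this.

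Next I would argue uniqueness of the recovered scene and viewpoint. Given $\R(\G(\z^\ast),\v^\ast)=\x_s=\R(\m_r,\v_r)$ with $\G(\z^\ast)\in\dom_\m$, $\v^\ast\in\dom_\v$ (feasibility constraints) and $\m_r\in\dom_\m$, $\v_r\in\dom_\v$, the bijectivity of $\R$ on $\dom_\m\times\dom_\v$ forces $(\G(\z^\ast),\v^\ast)=(\m_r,\v_r)$. Spelling this out via the inverse map $\R^{-1}=[\M,\V]$: applying $\M$ to both sides gives $\G(\z^\ast)=\M(\x_s)$, and applying $\V$ gives $\v^\ast=\V(\x_s)$. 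This is precisely the claim. The reconstructed 3D model $\m^\ast=\G(\z^\ast)$ therefore equals the true $\m_r$.

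**The main obstacle** is the step that asserts the global minimum of~\eqref{eq:invgan} is genuinely zero and attained, i.e.\ that $\m_r$ is in the range of $\G|_{\dom_\z}$. Theorem~\ref{gantheorem} only tells us the pushforward \emph{distribution} equals $p_\m$; promoting this to a statement about supports (every point of $\dom_\m=\overline{\{p_\m>0\}}$ being hit) requires continuity of $\G$ and a compactness or closedness argument, and strictly one only gets that the range is dense in $\dom_\m$ unless $\dom_\z$ is compact — which it is not, since $\z_f$ is Gaussian. The honest fix is either to observe that in the idealized infinite-capacity setting (Assumption 5) we may take $\G$ to realize exactly the needed value, or to restrict attention to samples $\x_s$ drawn from $p_\x$ (so $\m_r$ is in the support) and invoke density plus continuity to make the infimum zero, then note the $\arg\min$ is understood as an exact inverse in the idealized regime. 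I would state this carefully rather than sweep it under the rug. The remainder — feasibility nonemptiness and the bijectivity argument — is routine given Assumptions 1 and 3.
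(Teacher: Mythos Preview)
Your core argument---establish $\G(\z^\ast)\in\dom_\m$ and then invoke bijectivity of $\R$ on $\dom_\m\times\dom_\v$---matches the paper's proof exactly. However, most of your proposal addresses a non-issue: the theorem's hypothesis already stipulates that the global minimum is achieved with $\R(\G(\z^\ast),\v^\ast)=\x_s$, so your entire first paragraph and the ``main obstacle'' you identify (whether $\m_r$ lies in the range of $\G|_{\dom_\z}$) are unnecessary---existence and attainability of the zero minimum are given, not to be proved. The paper's proof is accordingly three lines. One minor sloppiness worth fixing: $\G(\z^\ast)\in\dom_\m$ is not a feasibility constraint of Problem~\eqref{eq:invgan} (only $\z^\ast\in\dom_\z$ and $\v^\ast\in\dom_\v$ are); it follows from $\z^\ast\in\dom_\z$, continuity of $\G$, and Theorem~\ref{gantheorem} (since $\G(\z_f)\sim p_\m$ forces the image of $\dom_\z$ into $\dom_\m$), which is precisely how the paper justifies it.
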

\begin{proof}
Let us denote $\m^* = \G(\z^*)$. $\G$ is continuous and $\z^* \in \dom_\z$, therefore $\m^* \in \dom_\m$ (otherwise $\lgan \ne 0$). Because $\x_s = \R(\m^*,\v^*)$, and R is invertible on $\dom_\m \times \dom_\v$, the inverses are $\M(\x_s) = \m^*$ and $\V(\x_s) = \v^*$.
\end{proof}
Lastly, we show that the encoder $\E$ combined with the ideal generator $\G$ reconstructs the correct surface of the object in the input image.

\begin{figure*}
\begin{center}
	\begin{subfigure}[b]{0.95 \textwidth}
		\includegraphics[width=\textwidth]{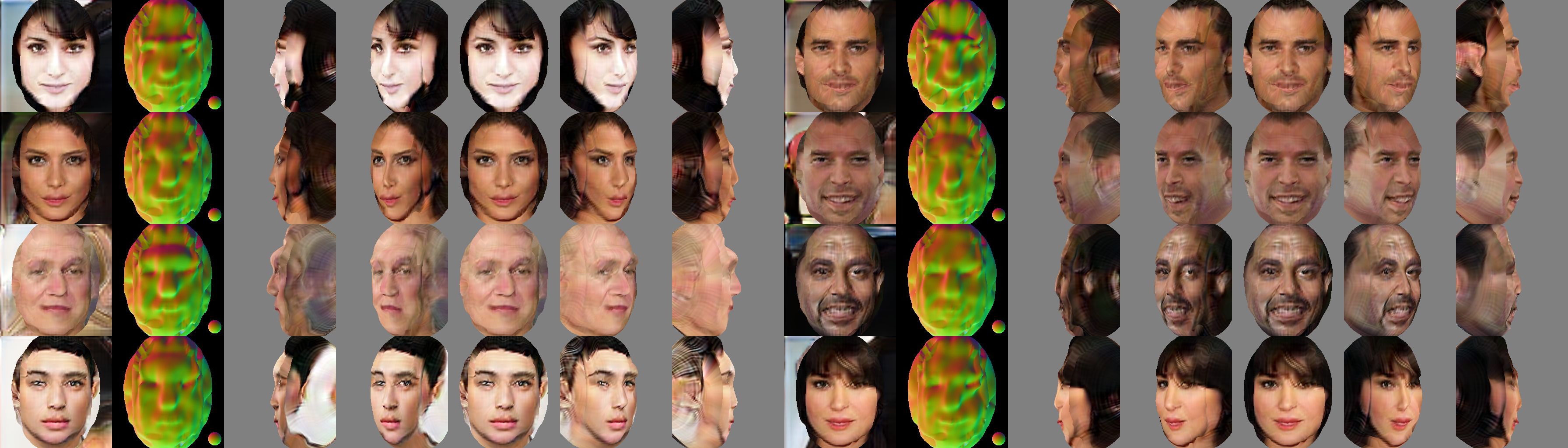}
	\end{subfigure}
\end{center}
	\caption{Samples from our generator on CelebA. The leftmost image has frontal view and background. The second shows the normal map coloured according to the reference sphere in the bottom right. The leftmost five images show views between $\pm 90$ degrees.}
\label{fig:gan}
\end{figure*}

\begin{figure}[t]
\begin{center}	
	\begin{subfigure}[b]{0.95 \linewidth}
		\includegraphics[width=1\linewidth]{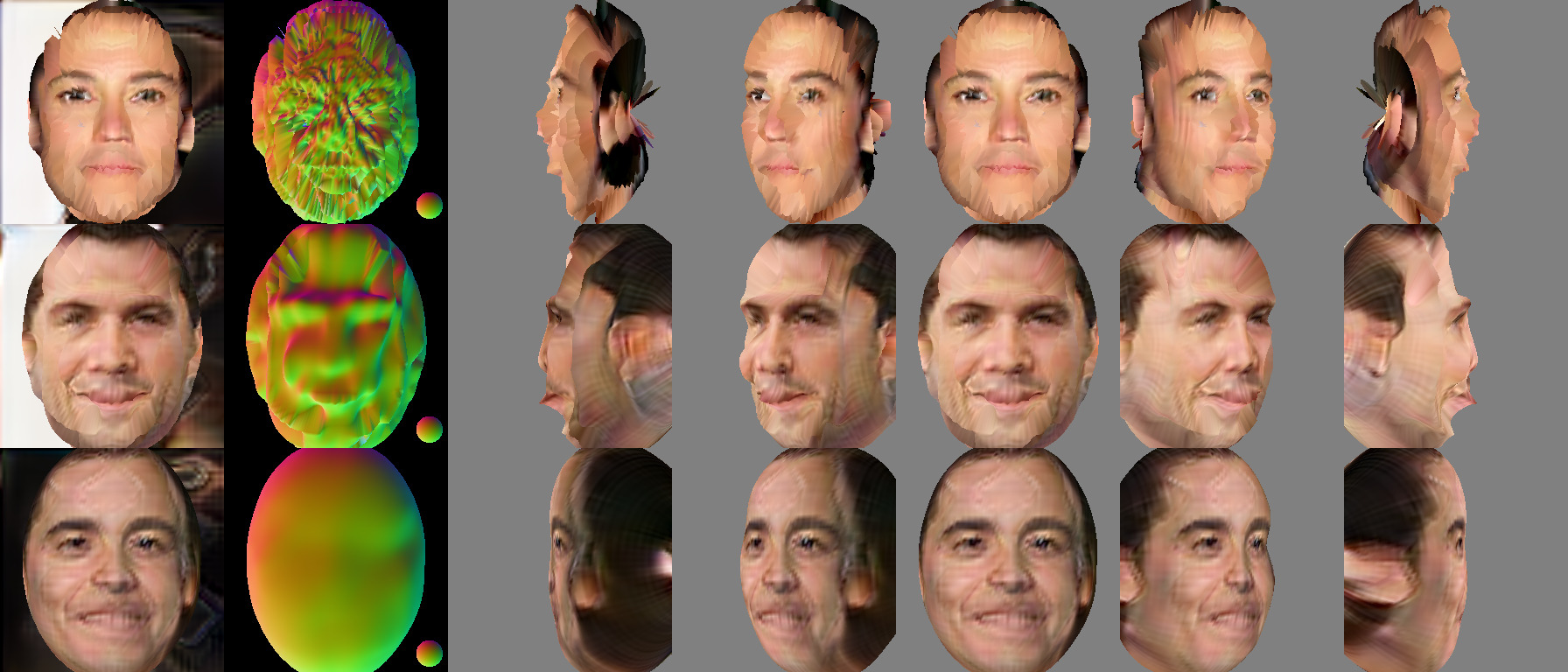}
	\end{subfigure}	
	\caption{Smoothing. From top to bottom the smoothing coefficient $\lambda_S$ is increased, with the top row having $\lambda_S = 0$. The images are rendered as in Figure~\ref{fig:gan}.}
	\label{fig:smooth}
\end{center}
\end{figure}

\begin{figure}[t]
\begin{center}	
	\begin{subfigure}[b]{0.95 \linewidth}
		\includegraphics[width=1\linewidth]{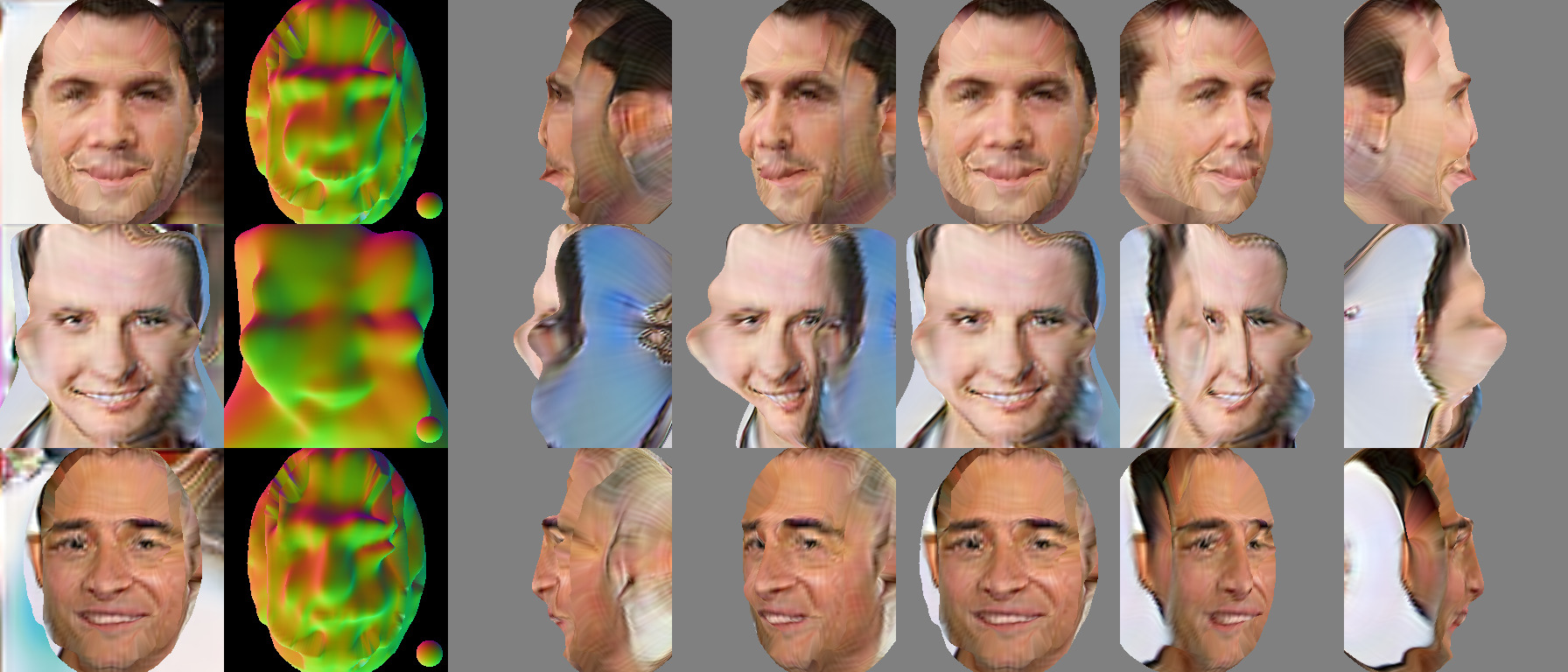}
	\end{subfigure}	
	\caption{Ambiguities. From top to bottom we show a correct sample, one with the hollow-mask, and one with the reference ambiguity.}
	\vspace{-5 mm}
	\label{fig:ambiguities}
\end{center}
\end{figure}

\begin{theorem} When the training is prefect ($\lgan=0$ and $\lae=0$), the estimated scene representation $\m_e = \G(\E_\m(\x_r))$ and viewpoint $\v_e = \E_\v(\x_r)$ are correct, \ie $\m_e = \M(\x_r)$ and $\v_e = \V(\x_r)$.
\label{aetheorem}
\end{theorem}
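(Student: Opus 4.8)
The plan is to read off a pointwise reconstruction identity from $\lae = 0$, promote the decoded representation into the admissible domain using $\lgan = 0$, and then close the argument by the bijectivity of $\R$ exactly as in the proof of Theorem~\ref{invgantheorem}. First I would unpack $\lae = 0$: since $\lae(\E) = E_{\x_r}[\,|\x_r - \R(\G(\z_e),\v_e)|^2\,]$ is the expectation of a nonnegative integrand, it vanishes only if $\R(\G(\z_e),\v_e) = \x_r$ for $p_\x$-almost every real image $\x_r$, where $\z_e = \E_\z(\x_r)$ and $\v_e = \E_\v(\x_r)$; because $\E$, $\G$ and $\R$ are continuous, the zero set of the (continuous) residual is closed and of full measure, so in fact $\R(\G(\z_e),\v_e)=\x_r$ for every $\x_r \in \dom_\x$. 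The encoder is moreover trained subject to $\z_e \in \dom_\z$ and $\v_e \in \dom_\v$, so those constraints are available.

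The key step, and the one I expect to be the main obstacle, is to certify that the decoded scene representation $\m_e = \G(\z_e)$ actually lies in $\dom_\m$, which is what licenses the use of the renderer inverse. This is the same subtlety flagged in the proof of Theorem~\ref{invgantheorem}. Since $\lgan = 0$, Theorem~\ref{gantheorem} gives $\G(\z_f) \sim p_\m$ for $\z_f \sim \mathcal{N}(0,I)$; as $\dom_\z$ is (the closure of) the support of the Gaussian and $\G$ is continuous, the support of the pushforward is $\overline{\G(\dom_\z)} = \operatorname{supp}(p_\m) = \dom_\m$, hence $\G(\dom_\z) \subseteq \dom_\m$. Because $\z_e \in \dom_\z$, we get $\m_e = \G(\z_e) \in \dom_\m$; equivalently, were $\m_e \notin \dom_\m$, the generator would put $3$D mass outside $\dom_\m$ and $\lgan$ could not be $0$.

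Finally I would invoke assumption~3: $\R$ is bijective on $\dom_\m \times \dom_\v$ with inverse $\R^{-1} = [\M,\V]$. We have just shown $(\m_e,\v_e) \in \dom_\m \times \dom_\v$ and $\R(\m_e,\v_e) = \x_r$, so applying $\R^{-1}$ yields $\m_e = \M(\x_r)$ and $\v_e = \V(\x_r)$, which is the stated conclusion. To match the informal reading of ``correct'', one further uses assumption~1: $\x_r = \R(\m_r,\v_r)$ with $\m_r \sim p_\m$ and $\v_r \sim p_\v$, so $\M(\x_r) = \m_r$ and $\V(\x_r) = \v_r$, and therefore $\m_e = \m_r$, $\v_e = \v_r$. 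Apart from the domain-membership step, every link in the chain is a direct application of the assumptions, so I do not anticipate any hidden difficulty there; the one place to be careful is keeping the ``almost every'' from $\lae = 0$ consistent with the pointwise statement, which the continuity remark above handles.
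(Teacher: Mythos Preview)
Your proposal is correct and follows essentially the same three-step route as the paper's proof: use $\lae=0$ and continuity to get the pointwise identity $\x_e=\x_r$, use $\z_e\in\dom_\z$ together with $\lgan=0$ to place $\m_e$ in $\dom_\m$, and then invoke the bijectivity of $\R$. Your version is more careful about the almost-everywhere-to-everywhere passage and about why $\G(\dom_\z)\subseteq\dom_\m$, but the underlying argument is the same.
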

\begin{proof}
Because $\E$, $\G$ and $\R$ and their compositions are continuous, $\x_r = \x_e$, otherwise  $\lae \ne 0$. The estimated scene representation $\m_e \in \dom_m$, because $\z_e \in \dom_\z$. Finally, the invertibility of $\R$ implies that $\m_e = \M(\x_e) = \M(\x_r)$.
\end{proof}
In general, assumption four does not hold for the 3D reconstruction problem. Depending on the dataset and the symmetries of the objects, ambiguities can arise. One notable failure case is the \emph{hollow-mask illusion} \cite{gregory1970}. An inverted mask (a concave face) can look realistic, even though it is far from the true geometry. This failure mode can be overcome when the range of viewpoints is large enough so the self-occlusions give away the depth information. In our experiments we observe cases where the system learns inverted faces, but this ambiguity never appears on ShapeNet objects, as they are rendered with a large range of viewpoints.
Another example is the \emph{reference ambiguity} \cite{szabo2018understanding}. Two different 3D models can both be realistic, but their reference frames are not necessarily the same, \ie, when they are rendered with the same numerical values of viewpoint angles, they are not aligned.
We show examples of both of these ambiguities in Figure~\ref{fig:ambiguities}.

It is important to note that the above mentioned problems are different from the ill-posedness of the single image 3D reconstruction problem, which means that many different 3D objects can produce the same 2D projection. When the viewpoint distribution is known, we can render a candidate 3D shape from a different viewpoint, which immediately reveals, whether it is realistic or not. Ill-posedness is only a problem if the viewpoint distribution is not known. If one has to estimate the viewpoint distribution as well, a trivial failure mode could emerge. The encoder and generator would learn to map images to a flat surface with a fixed viewpoint and the textures would match exactly the 2D inputs.

\begin{figure*}
\begin{center}
	\begin{subfigure}[b]{0.47 \textwidth}
		\includegraphics[width=\textwidth]{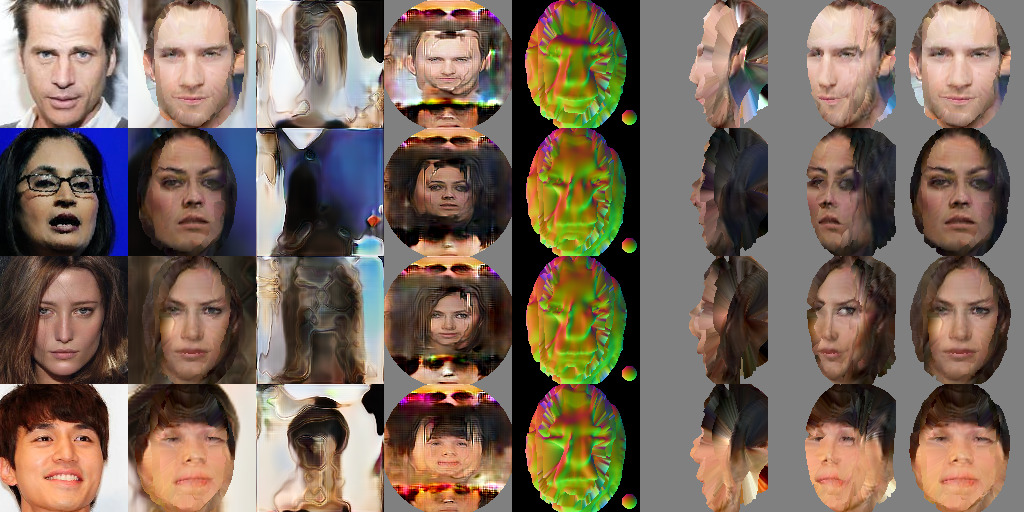}
		\caption{Autoencoder}
	\end{subfigure}
	\begin{subfigure}[b]{0.47 \textwidth}
		\includegraphics[width=\textwidth]{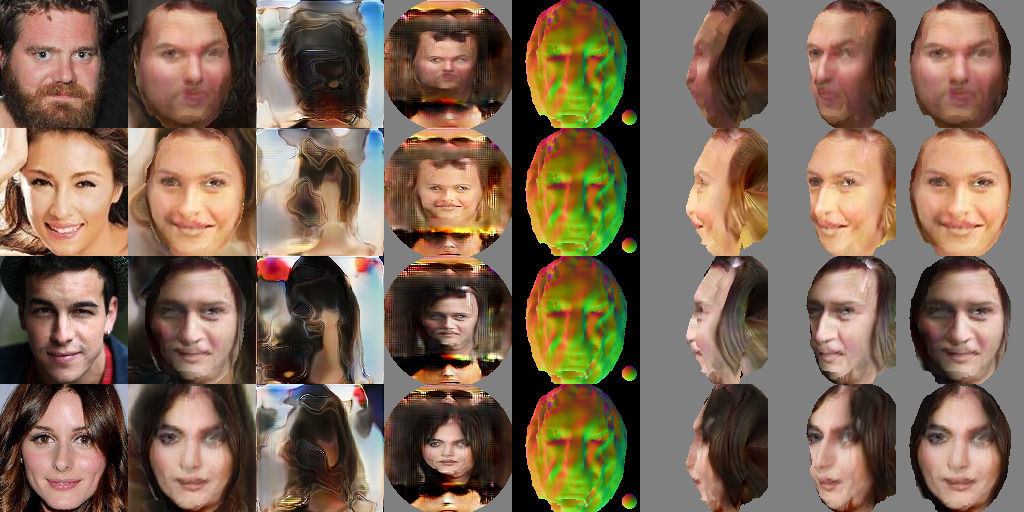}
		\caption{Autoencoder with smoothing}
	\end{subfigure}
\end{center}
	\caption{Reconstructions with two different autoencoders: \textbf{(a)} only autoencoder loss is optimized, \textbf{(b)} autoencoder and smoothing loss optimized. The images show from left to right: input, estimated image, background texture, object texture, object 3D and renderings from $3$ viewpoints. The grey portions of the texture is not used in our generator model.}
\label{fig:autoencoder}
\end{figure*}

\begin{figure*}
\begin{center}	
	\begin{subfigure}[b]{0.95 \linewidth}
		\includegraphics[width=1\linewidth]{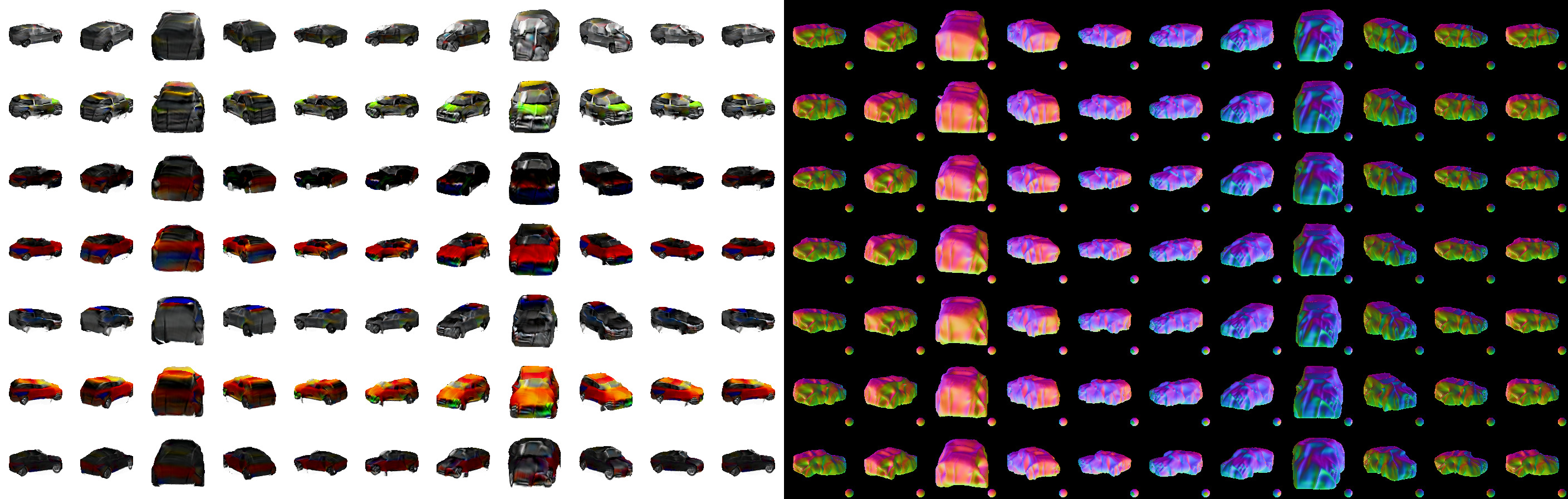}
	\end{subfigure}	
	\caption{Samples from our generator on ShapeNet. Each row shows a rendered car from different viewpoint. The 3D normals are shown on the right side.}
	\vspace{-5 mm}
	\label{fig:shapenet}
\end{center}
\end{figure*}

\begin{figure*}[t]
\begin{center}
	\begin{subfigure}[b]{\mofaspace \textwidth}
		\includegraphics[width=\textwidth]{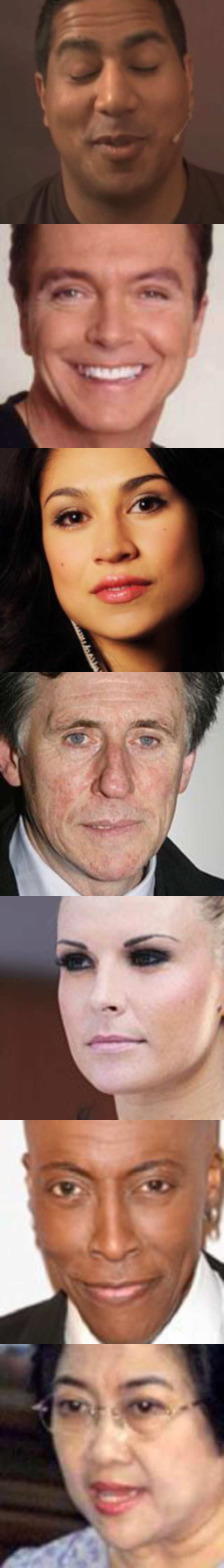}
		\caption{}
	\end{subfigure}
	\quad
	\begin{subfigure}[b]{\mofaspace \textwidth}
		\includegraphics[width=\textwidth]{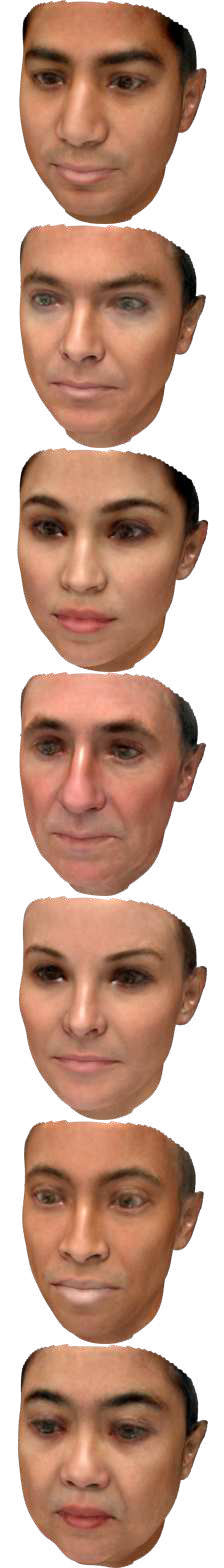}
		\caption{}
	\end{subfigure}
	\quad
	\begin{subfigure}[b]{\mofaspace \textwidth}
		\includegraphics[width=\textwidth]{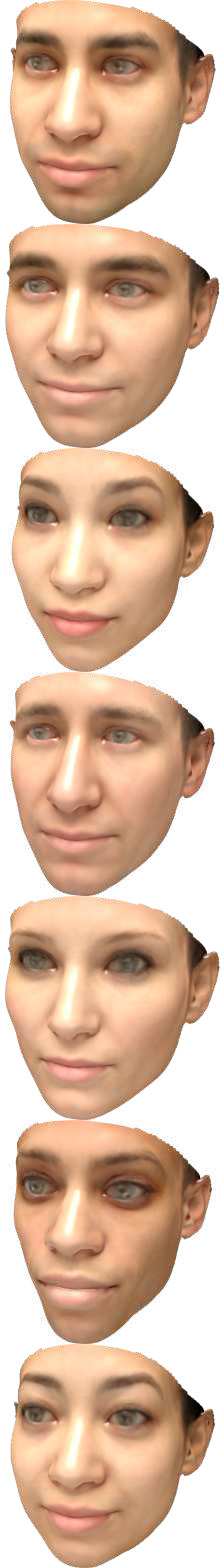}
		\caption{}
	\end{subfigure}
	\quad
	\begin{subfigure}[b]{\mofaspace \textwidth}
		\includegraphics[width=\textwidth]{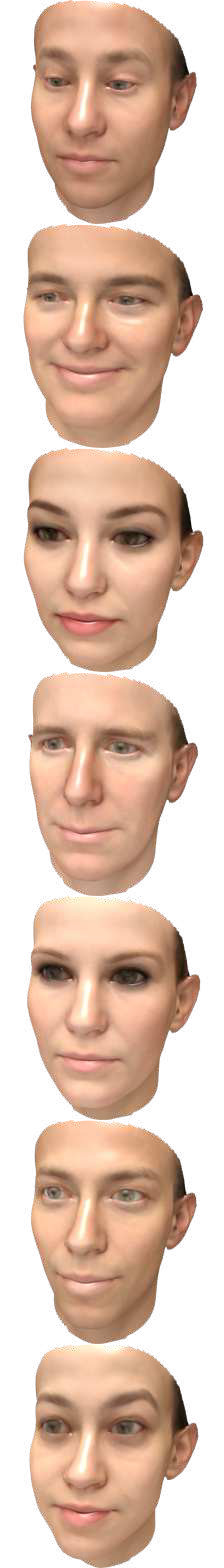}
		\caption{}
	\end{subfigure}
	\quad
	\begin{subfigure}[b]{\mofaspace \textwidth}
		\includegraphics[width=\textwidth]{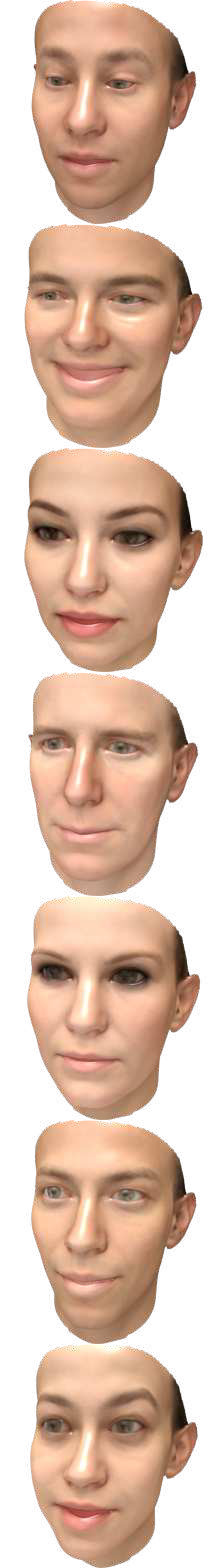}
		\caption{}
	\end{subfigure}
	\quad
	\begin{subfigure}[b]{\mofaspace \textwidth}
		\includegraphics[width=\textwidth]{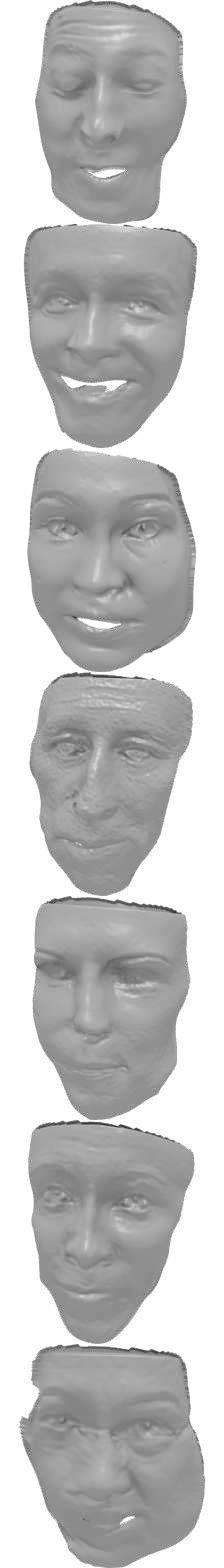}
		\caption{}
	\end{subfigure}
	\quad
	\begin{subfigure}[b]{\mofaspaceour \textwidth}
		\includegraphics[width=\textwidth]{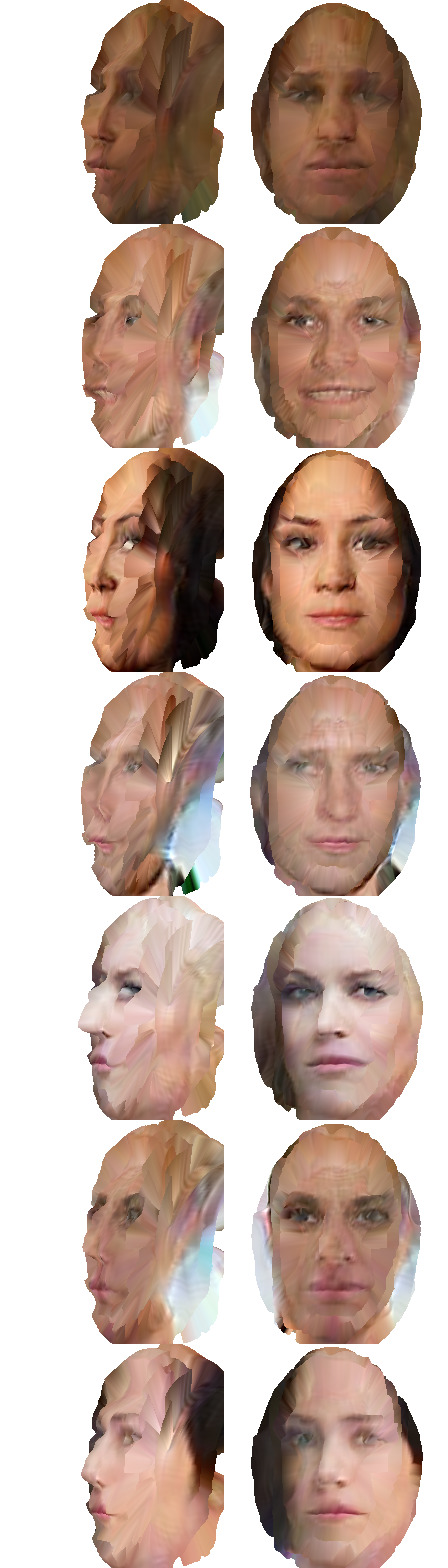}
		\caption{}
	\end{subfigure}
	\quad
	\begin{subfigure}[b]{\mofaspaceour \textwidth}
		\includegraphics[width=\textwidth]{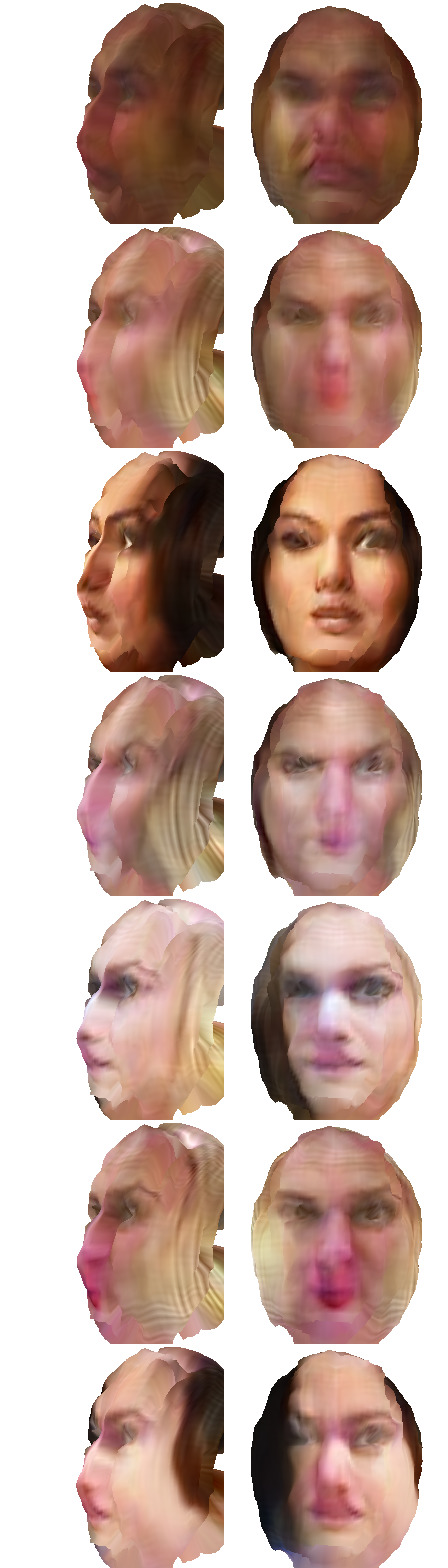}
		\caption{}
	\end{subfigure}
\end{center}
	\caption{Comparisons with other methods on face reconstruction: \textbf{(a)} Input images, \textbf{(b)} Genova \cite{genova2018}, \textbf{(c)} Tran \cite{tran2017}, \textbf{(d)} MoFA \cite{mofa2017},  \textbf{(e)} MoFA + Expressions \cite{mofa2017},  \textbf{(f)} Sela \cite{sela}, \textbf{(g)} ours $\lae$ and \textbf{(h)} ours $\lae+\lsmooth$. Other methods use much stronger supervision on pre-trained models. In contrast our is fully unsupervised.}
\label{fig:mofa}
		\vspace{-3 mm}
\end{figure*}

\section{Experiments}

We trained the generator $\G$ on CelebA \cite{celeba} and ShapeNet \cite{shapenet} datasets and our autoencoder ($\E+\G$) on CelebA. We studied the effect of a regularization term on the surface normals, we shown observations of the hollow-mask and reference ambiguities and compared our method to other face reconstruction methods.

\noindent\textbf{CelebA.}
CelebA has $200$k colored photos of faces, $10\%$ of which are validation and $10\%$ test images. We used the images at a $128\times 128$ pixel resolution. We randomly sampled the Euler angles uniformly in the range of $\pm 15$ degrees for rotating around the horizontal axis and $\pm 65$ degrees around the vertical axis. We did not rotate along the camera axis. For the autoencoder training we increased the bounds to $\pm 75$ and $\pm 20$ degrees respectively and we also allowed rotations along the camera axis by $\pm 15 $ degrees.

Figure~\ref{fig:gan} shows samples from our generator trained on CelebA. For better viewing we rendered them at $256$ pixel resolution. We can see that we achieve plausible textures and 3D shapes. We can clearly see the reconstruction of the nose, brow ridge and the lips. Some smaller details of the 3D are not precise, we can observe some high frequency artifacts and the side of the face has errors as well. However our results are promising, given that it is the first attempt at generating colored 3D meshes on a real dataset without using any annotations.

\noindent\textbf{Smoothing.}
We added a smoothing term in the objective that is meant to make the generated meshes smoother. It is defined as
\begin{align}
	\lsmooth(\G) = E_{\z_f,\v_f} \left[ \sum_{i, j \in \mathcal{N}} 1-\n_i \cdot \n_j \right],
\end{align}
where $i$ and $j$ are indices of neighboring triangles and $\n_i$ and $\n_j$ are the normal vectors of those triangles. During training we optimize $\lgan(\G,\D) + \lambda_S \lsmooth(\G)$ instead of the GAN objective~\eqref{eq:gan}.
We show results with different amounts of smoothing in Figure~\ref{fig:smooth}. We can see that without smoothing the 3D has high frequency artifacts. When $\lambda_S$ is too high, the system cannot learn the correct details of the 3D surface, but only an average ellipsoid. With a moderate amount of smoothing the system reduces the high frequency artifacts, and keeps the larger 3D features.

\noindent\textbf{Ambiguities.}
Figure~\ref{fig:ambiguities} shows the ambiguities discussed in section~\ref{theory}. The hollow-mask ambiguity could be observed on a large proportion of generated samples. Because most faces in the CelebA dataset are close to the frontal view, there are only a few examples that provide self-occlusion cues. However we noticed that the system tried to increase  the size of the object to create better looking hollow-masks. Thus we limited the object size by resizing it when its radius (the maximal radial distance of its vertices) was too large. This helped to eliminate most cases of hollow-masks. We can also see a sample having the reference ambiguity
in Figure~\ref{fig:ambiguities}. The 3D and the texture is plausible, but the reference frame of the object differs from the canonical.

\noindent\textbf{Autoencoder.}
Figure~\ref{fig:autoencoder} shows results with two settings of our autoencoder. The first one is trained by minimizing the auto-encoder loos $\lae$, the other one included the smoothing term too, $\lae + \lambda_S \lsmooth$. The smoothing removes most of the high frequency artifacts of the 3D shape, but tends to generate only an average 3D shape.

\noindent\textbf{Comparisons.}
In Figure~\ref{fig:mofa} we compared our autoencoder to other methods that reconstruct faces from single images. Although the quality of our 3D shapes does not reach the state of the art, we do not use supervision unlike all the other methods.  Tran \etal \cite{tran2017} and Genova \etal \cite{genova2018} regress the parameters of the Basel face model \cite{basel2017}, while Sela \etal \cite{sela} uses synthetic data and MoFA \cite{mofa2017} utilises face scans.

\noindent\textbf{ShapeNet}
ShapeNet consists of 3D models of $55$ object categories, on average $1$k models for each category. We used renderings of the car category from $24$ distinct viewpoint, uniformly spaced around the objects with an elevation of $15$ degrees. We rendered them at a $128x128$ pixel resolution. We made a several changes to the system, so it could learn form the ShapeNet data. We changed the order of rotations along the horizontal and vertical axes, so we could render the mesh in a full circle around the object. We set the background to a constant white, the same colour as the rendered cars had for background. We did not use the resizing technique to constrain the objects in a volume, as the hollow-mask ambiguity did not occur. Otherwise we used the same parameters as we used for the CelebA training. Samples from our generator are shown on Figure~\ref{fig:shapenet}.

\section{Conclusions}

We have presented a method to build a generative model capable of learning the 3D surface of objects directly from a collection of images. Our method does not use annotation or prior knowledge about the 3D shapes in the image collection. The key principle that we use is that the generated 3D surface is correct if it can be used to generate other realistic viewpoints. To create new views from the generated 3D and texture we use a differential renderer and train our generator in an adversarial manner against a discriminator.
Our experimental results on the reconstructed 3D and texture from real and synthesis images showed encouraging results.

{\small
\bibliographystyle{ieee}
\bibliography{refs}
}

\newpage
\onecolumn

\begin{center}
\section*{Supplementary Material}
\end{center}
\vspace{2cm}

\begin{figure*}[h]
\begin{center}	
	\begin{subfigure}[b]{0.4 \linewidth}
		\includegraphics[width=1\linewidth]{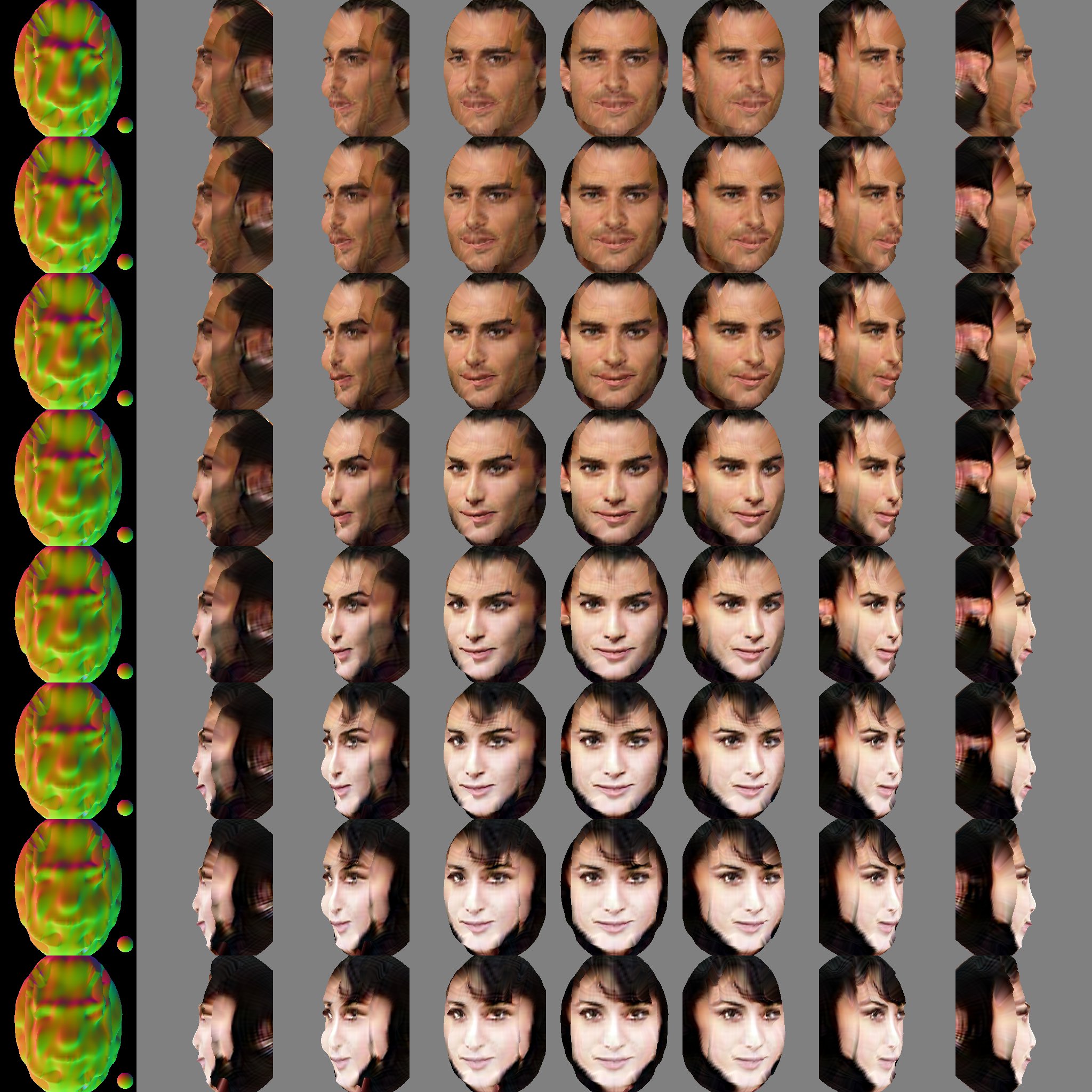}
	\end{subfigure}
	~
	\begin{subfigure}[b]{0.4 \linewidth}
		\includegraphics[width=1\linewidth]{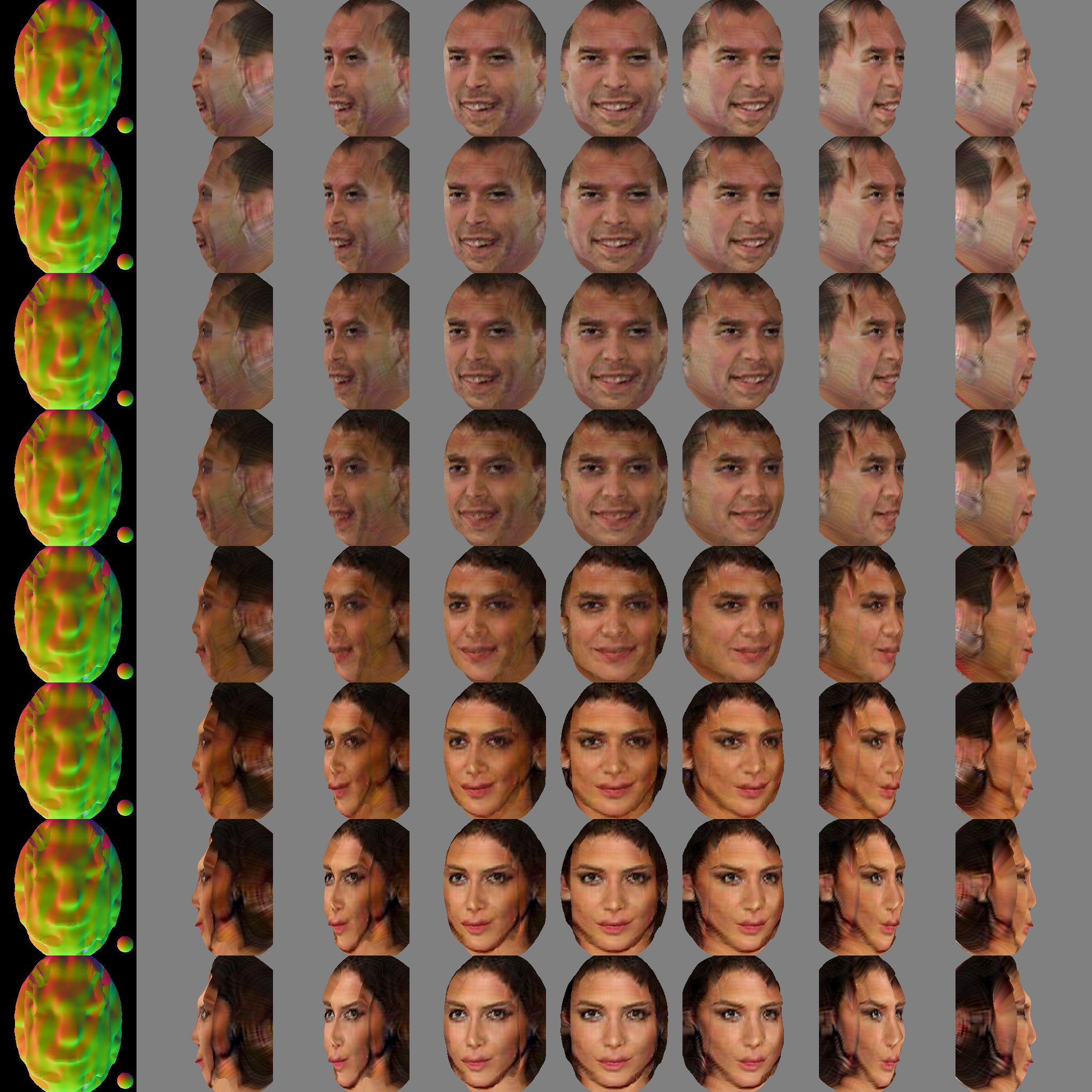}
	\end{subfigure}
	\begin{subfigure}[b]{0.4 \linewidth}
		\includegraphics[width=1\linewidth]{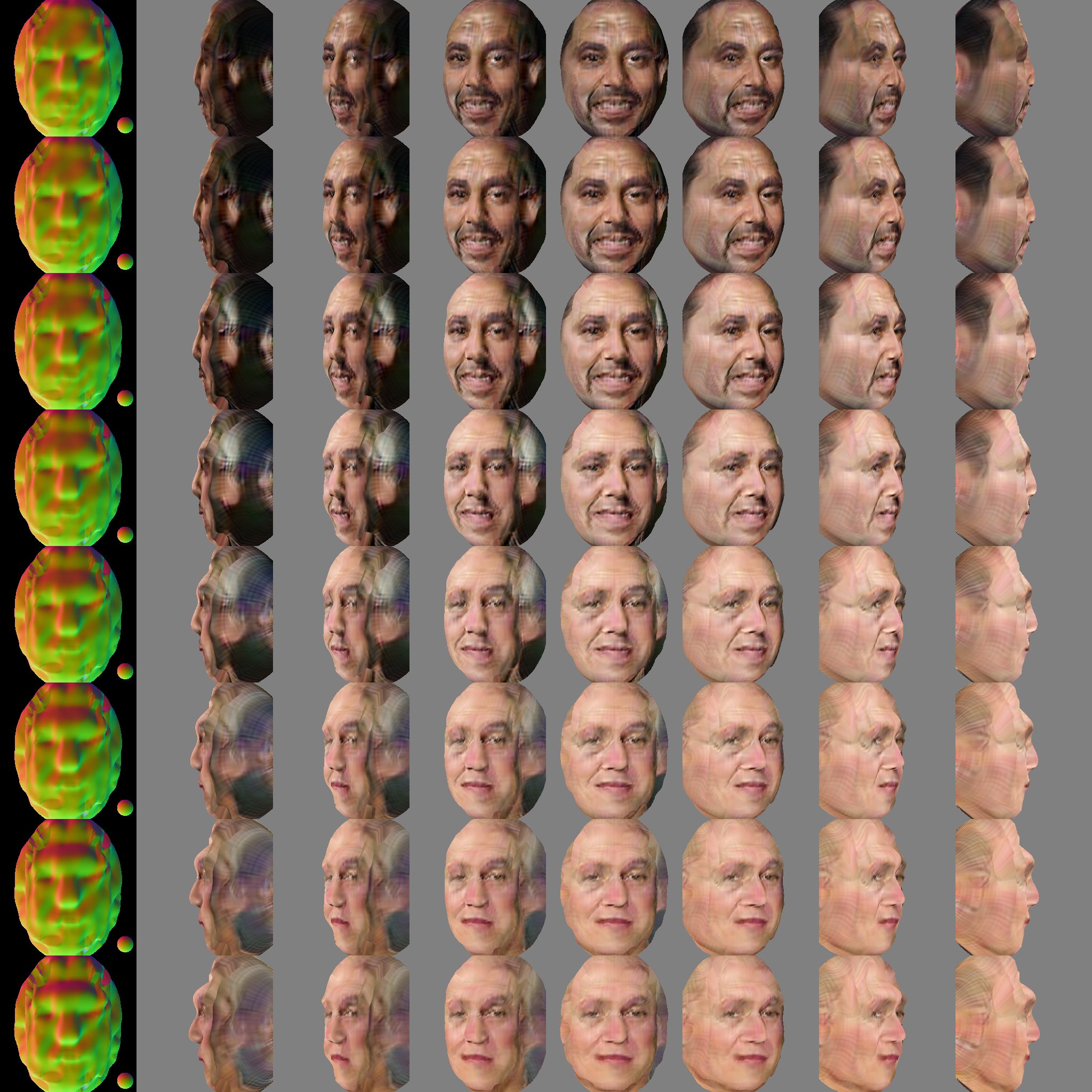}
	\end{subfigure}
	~
	\begin{subfigure}[b]{0.4 \linewidth}
		\includegraphics[width=1\linewidth]{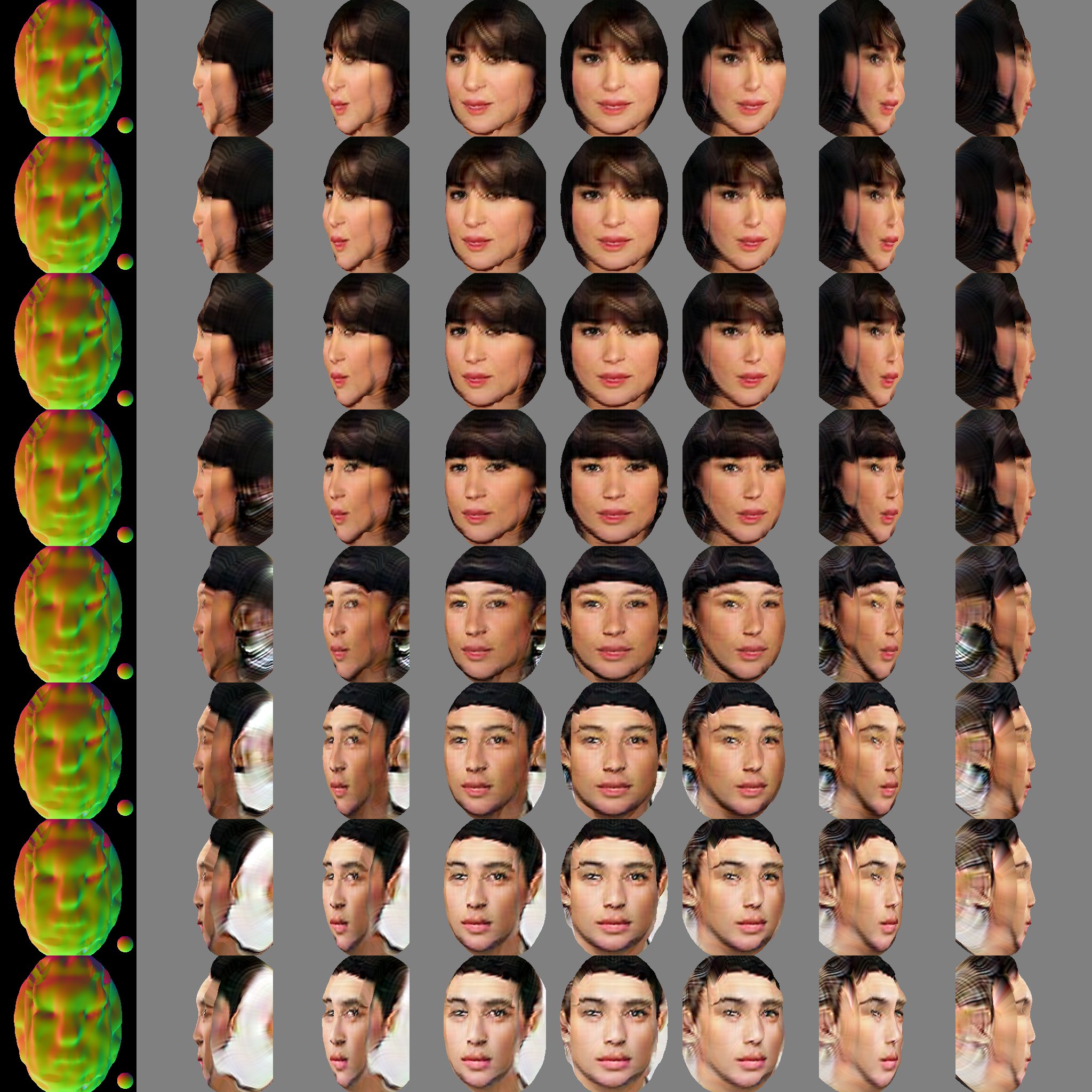}
	\end{subfigure}
	
	\caption{Latent feature interpolations on CelebA. Each row shows renderings of a 3D model $\m = \G(a \z_0 + (1-a) \z_1)$, where $a$ goes from $0$ to $1$. The first column shows the surface normals coloured according to a reference sphere. From the second to the last column we show viewpoints between $\pm 90$ degrees.}
	\label{fig:interpolation}
\end{center}
\end{figure*}

\twocolumn

\begin{figure*}
\begin{center}	
	\begin{subfigure}[b]{0.95 \linewidth}
		\includegraphics[width=1\linewidth]{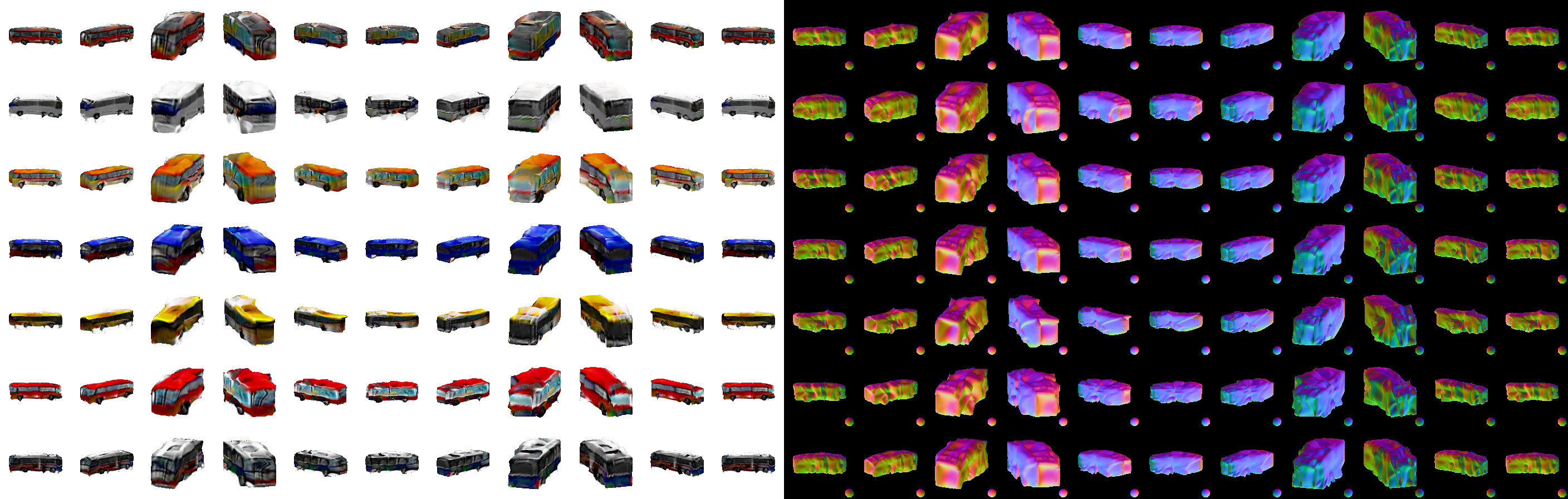}
		\caption{bus}
	\end{subfigure}
	\begin{subfigure}[b]{0.95 \linewidth}
		\includegraphics[width=1\linewidth]{figures/shapenet_gan/car.jpg}
		\caption{car}
	\end{subfigure}
	\begin{subfigure}[b]{0.95 \linewidth}
		\includegraphics[width=1\linewidth]{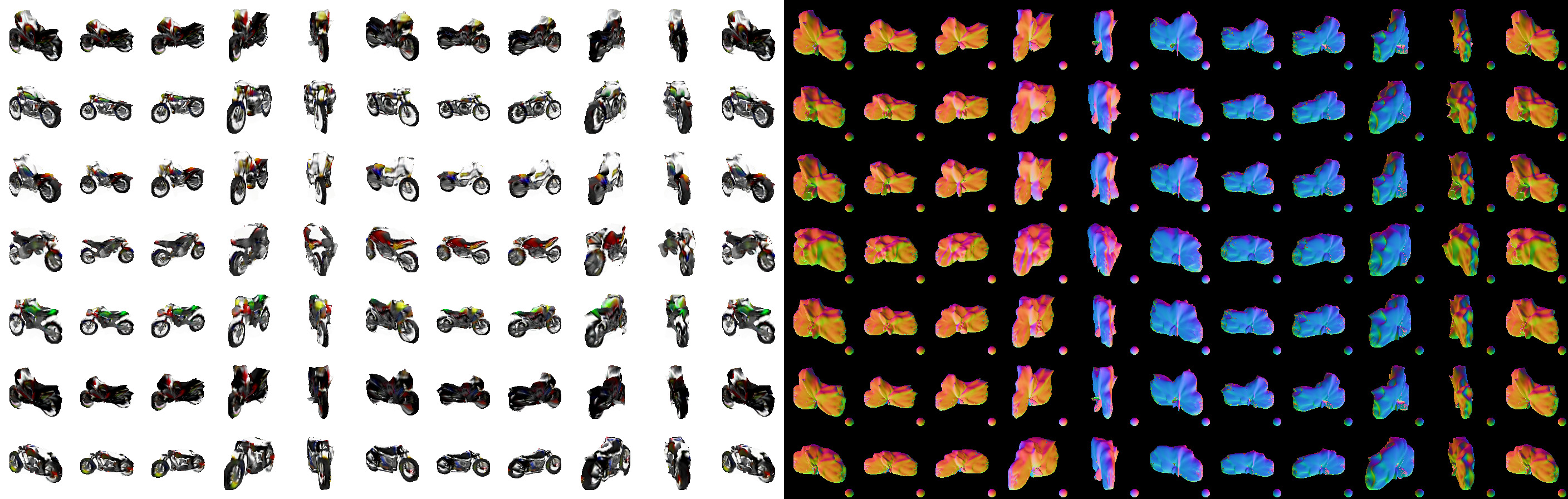}
		\caption{motorcycle}
	\end{subfigure}	
	\caption{Samples from our generator on ShapeNet categories: bus, car, motorcycle. Each row shows a randomly generated ShapeNet object, rendered from viewpoints between $\pm 180$ degrees. The left eleven columns show the rendered images, while the right eleven columns display the surface normals with the reference sphere.}
	\label{fig:shapenet_categories}
\end{center}
\end{figure*}

\end{document}